\def\eqref#1{equation~\ref{#1}}
\def\1{\bm{1}}
\DeclareMathAlphabet{\mathsfit}{\encodingdefault}{\sfdefault}{m}{sl}
\SetMathAlphabet{\mathsfit}{bold}{\encodingdefault}{\sfdefault}{bx}{n}
\newcommand{\mbf}{\mathbf}
\newcommand{\mscr}{\mathscr}
\newcommand\given[1][]{\:#1\vert\:}
\newcommand{\scope}{\text{sc}}
\newcommand{\pc}{\mscr{A}}
\newcommand{\pcnode}{t}
\newcommand{\pcfunc}{p}
\newcommand{\pcweight}{w}
\newcommand{\pcleaffn}{f}
\newcommand{\vtree}{V}
\newcommand{\vtreenode}{v}
\newcommand{\prods}{\text{prod}}
\renewcommand{\mbf}{\bm}
\newcommand*{\belowrulesepcolor}[1]{%
  \noalign{%
    \kern-\belowrulesep
    \begingroup
      \color{#1}%
      \hrule height\belowrulesep
    \endgroup
  }%
}
\newcommand*{\aboverulesepcolor}[1]{%
  \noalign{%
    \begingroup
      \color{#1}%
      \hrule height\aboverulesep
    \endgroup
    \kern-\aboverulesep
  }%
}
\theoremstyle{plain}
\newtheorem{theorem}{Theorem}[section]
\newtheorem{proposition}[theorem]{Proposition}
\newtheorem{claim}[theorem]{Claim}
\theoremstyle{definition}
\newtheorem{definition}[theorem]{Definition}
\theoremstyle{remark}
\newtheorem{remark}[theorem]{Remark}
\newcommand{\chg}[1]{{#1}}
\begin{document}

\pagenumbering{arabic}

\runningauthor{Honghua Zhang, Benjie Wang, Marcelo Arenas, Guy Van den Broeck}

\twocolumn[

\aistatstitle{Restructuring Tractable Probabilistic Circuits}

\aistatsauthor{Honghua Zhang* \And Benjie Wang*}
\aistatsaddress{University of California, Los Angeles \And University of California, Los Angeles}
\aistatsauthor{Marcelo Arenas \And Guy Van den Broeck}
\aistatsaddress{Pontificia Universidad Católica de Chile \And University of California, Los Angeles} 
]

\begin{abstract}
Probabilistic circuits (PCs) are a unifying representation for probabilistic models that support tractable inference. Numerous applications of PCs like controllable text generation depend on the ability to efficiently \emph{multiply} two circuits. Existing multiplication algorithms require that the circuits respect the same \emph{structure}, i.e. variable scopes decomposes according to the same \emph{vtree}. 
In this work, we propose and study the task of \emph{restructuring} structured(-decomposable) PCs, that is, transforming a structured PC such that it conforms to a target vtree. 
We propose a generic approach for this problem and show that it leads to novel polynomial-time algorithms for multiplying circuits respecting \emph{different} vtrees, as well as a practical depth-reduction algorithm that preserves structured decomposibility.
Our work opens up new avenues for tractable PC inference, suggesting the possibility of training with less restrictive PC structures while enabling efficient inference by changing their structures at inference time.
\end{abstract}

\section{INTRODUCTION}
A key challenge in deep generative modeling is the intractability of probabilistic reasoning~\citep{roth1996hardness, GehEMNLP24}. 
To address this challenge, probabilistic circuits~(PCs)~\citep{darwiche2003differential,poon2011sum,ProbCirc20} has emerged as a unifying representation of \emph{tractable} generative models for high-dimensional probability distributions. In particular, PCs support efficient and exact evaluation of various inference queries such as marginalization. The tractability of PCs has proven crucial in various applications, such as causal inference \citep{zevcevic2021interventional, wang2023compositional,busch2024psi}, knowledge graph learning \citep{loconte2023turn} and ensuring fairness in decision making~\citep{choi2021group}.

Probabilistic circuits represent distributions as \emph{computation graphs} of sums and products. A crucial aspect to the design of PCs is the \emph{structure} of the computation graph, that is, how distributions are factorized into (conditionally) independent components. The structure of PCs affects their tractability, modeling performance and computational efficiency. 
In this work, we consider the problem of \emph{restructuring} PCs: constructing a new PC that follows a particular (target) structure while representing the same distribution. We present a general algorithm for restructuring structured-decomposable circuits by considering their graphical model representations. Specifically, we leverage the graphical models to reason about conditional independencies and recursively construct a new PC conforming to the desired structure.

We then investigate two key applications of PC restructuring: circuit multiplication and depth reduction. Circuit multiplication is a fundamental operation used for answering various inference queries~\citep{VergariNeurIPS21}, such as conditioning on logical constraints~\citep{ChoiIJCAI15, ahmed2022semantic,LiuICLR24,ZhangICML23,zhang2024adaptable}, computing expected predictions of classifiers~\citep{KhosraviNeurIPS19} and causal backdoor adjustment~\citep{wang2023compositional}, as well as in improving the expressive power of circuits through squaring~\citep{loconte2024subtractive, loconte2024sum, WangTPM24}. 
Though the problem of multiplying circuits of different structures is in general \#P-hard~\citep{VergariNeurIPS21}, we identify a new class of PCs, which we call \emph{contiguous} circuits, where it is possible to multiply circuits of different structures in polynomial (or quasi-polynomial) time using our algorithm.

We also consider depth reduction, a well-established theoretical tool for reducing the depth of a circuit~\citep{valiant1983fast,raz2008balancing}. Recent PC implementations have focused on layer-wise parallelization of PC inference via modern GPUs, and depth reduction enables greater parallelization~\citep{PeharzICML20,DangAAAI21,LiuICML24,loconte2024relationship}. In this work, we show that our restructuring algorithm can be used to transform a structured-decomposable circuit to an equivalent log-depth circuit, with much tighter upper bounds than given by prior work. This opens up new possibilities of practically implementing depth reduction techniques to speed up PC inference.

In summary, our key contributions are: (1) we identify the \emph{restructuring} problem, which concerns the computational complexity of converting between different tractable PC structures; (2) we present a general algorithm for mapping between any two structures (Section \ref{sec:restructuring}); (3) we show that restructuring is possible in (quasi-)polynomial complexity for a class of \emph{contiguous} structures (Section \ref{sec:multiplication});  and (4) we illustrate the application of restructuring to two important problems: \emph{multiplying} probability distributions (Section \ref{sec:multiplication}) and reducing the \emph{depth} of the model structure (Section \ref{sec:depth_reduction}), where we derive novel results on tractability.

\section{PROBABILISTIC CIRCUITS}
\paragraph{Notation} We will use uppercase to denote variables (e.g. $X$) and lowercase to denote values of those variables (e.g. $x$). We use boldface to denote sets of variables/values (e.g. $\mbf{X}, \mbf{x}$).

\begin{definition}[Probabilistic Circuit]
    A probabilistic circuit (PC) $\pc = (\mathcal{G}, \mbf{w})$ represents a joint probability distribution over random variables $\bm{X}$ through a rooted directed acyclic (computation) graph (DAG), consisting of sum ($\oplus$), product ($\otimes$), and leaf nodes ($L$), parameterized by $\mbf{w}$. Each node $\pcnode$ represents a probability distribution $\pcfunc_{\pcnode}(\bm{X})$, defined recursively by:    
    \begin{align*}
        \pcfunc_{\pcnode}(\bm{x}) = 
        \begin{cases}
            \pcleaffn_{\pcnode}(\bm{x}) & \text{if $t$ is a leaf node} \\
            \prod_{c \in \text{ch}(t)} \pcfunc_{c}(\bm{x}) & \text{if $t$ is a product node} \\
             \sum_{c \in \text{ch}(t)} w_{t, c} \pcfunc_{c}(\bm{x}) & \text{if $t$ is a sum node}
        \end{cases}
    \end{align*}
    where $f_t(\bm{x})$ is a univariate input distribution function (e.g. Gaussian, Categorical), we use $\text{ch}(t)$ to denote the set of children of a node $t$, and $w_{t, c}$ is the non-negative weight associated with the edge $(t, c)$ in the DAG, which satisfy the constraint that $\sum_{c \in \text{ch}(t)} w_{t, c} = 1$ for every sum node $t$.     
    We define the \emph{scope} of a node $t$ to be the variables it depends on. 
    The function represented by a PC, denoted $p_{\pc}(\mbf{x})$, is the function represented by its root node; and the size of a PC, denoted $|\pc|$, is the number of edges in its graph. 
\end{definition}

Intuitively, product nodes represent a factorized product of its child distributions, while sum nodes represent a weighted mixture of its child distributions. For simplicity, in the rest of this paper we assume that sum/leaf and product nodes alternate (i.e. child of a sum is a product, and child of a product is a leaf or sum), and that each product has exactly two children.
The key feature of PCs is their \emph{tractability}, i.e., the ability to answer queries about the distributions they represent exactly and in polynomial time. Two commonly assumed properties known as smoothness and decomposability ensure efficient marginalization:

\begin{definition}[Smoothness and Decomposability]
    A sum node is \emph{smooth} if all of its children have the same scope. A product node is \emph{decomposable} if its children have disjoint scope. A PC is smooth (resp. decomposable) if all of its sum (resp. product) nodes are smooth (resp. decomposable).
\end{definition}

Intuitively, decomposability requires that a product node partitions its scope among its children. For many other important queries, it is useful to enforce a stronger form of decomposability, known as \emph{structured-decomposability}, that requires that product nodes with the same scope decompose in the same way.

\begin{definition}[Vtree]
    A vtree $V$ over variables $\mbf{X}$ is a rooted binary tree, where each $X \in \bm{X}$ is associated with a unique leaf node $v$ (we write $X_v$ for the variable associated with node $v$). Each inner node $v$ covers a set of variables $\mbf{X}_v$, satisfying $\mbf{X}_v = \mbf{X}_l \cup \mbf{X}_r$ where $l, r$ are the children of $v$. We write $V_v$ to denote the subtree rooted at $v$.
\end{definition}

\begin{definition}[Structured Decomposability]
    A PC $\pc$ is structured-decomposable (w.r.t a vtree $V$) if every product node $t \in \pc$ decomposes its scope according to some inner vtree node $v \in V$.
\end{definition}

The main advantage of structured decomposability is that it enables tractable circuit multiplication of two circuits respecting the same vtree, which is a core subroutine for many applications. 
However, structured decomposable circuits can be less expressive efficient in general \citep{de2021compilation}.

\section{PC RESTRUCTURING} \label{sec:restructuring}
In this section, we describe a generic approach that restructures any structured-decomposable PC to respect a target vtree. The approach consists of three steps: (1) construct a Bayesian network representation of the PC; (2) find sets of latent variables in the Bayesian network that induce conditional independecies required by the target vtree; (3) construct a new structured PC recursively leveraging the conditional independence derived in (2).

\subsection{Structured PCs as Bayesian Networks}
\label{subsec:pc_to_graphical}
It is known that one can efficiently compile a tree-shaped Bayesian network to an equivalent probabilistic circuit~\citep{darwiche2003differential, poon2011sum, dang2020strudel, LiuNeurIPS21}.
In this subsection, we describe how to go in the opposite direction, i.e. converting an arbitrary structured-decomposable PC to a tree-shaped Bayesian network with linearly many variables.

Let $\pc$ be a structured PC over variables $\mbf{X}$ respecting vtree $\vtree$.
Given a vtree node $v \in \vtree$, we write $\prods(\vtreenode)$ to denote the set of all product nodes with scope $\mbf{X}_{\vtreenode}$. We define the \emph{hidden state size} $h$ of the circuit to be $\max_{v \in \vtree} |\prods(\vtreenode)|$. Writing $n$ for the number of variables, the size of the circuit is then $O(nh^2)$.\footnote{The number of active sum nodes per vtree node is at most $h$, as each such node must have a different product node parent corresponding to the parent vtree node scope. This leads to $O(h^2)$ edges per vtree node.}

We begin by providing a latent variable interpretation of structured PCs. Specifically, we define an augmented PC which explicitly associates latent variables with product nodes for each variable scope. Given some vtree node $v$, let us associate each $t \in \prods(v)$ with a unique index $\text{idx}(t) \in \{0, ..., |\prods(v)| - 1\}$, also writing $t_{v, i}$ to refer to the product node with index $i$ in $\prods(v)$. Then we can introduce a categorical latent variable $Z_v$ whose value corresponds to a particular product node in $\prods(v)$:

\begin{definition}[Augmented PC]
    Given a structured-decomposable and smooth PC $\pc$ over variables $\mbf{X}$ respecting vtree $V$, we define the augmented PC $\pc_{\text{aug}}$ to be a copy of $\pc$ where for each vtree node $v \in V$, we add an additional child $t_{\text{aug}}$ to each product node $t \in \prods(v)$ that is a leaf node with scope $Z_v$ and leaf function $\pcleaffn_{t_{\text{aug}}}(Z_v) = \mathds{1}_{Z_v = \text{idx}(t)}$.
\end{definition}

It is not hard to see that the augmented PC $\pc_{\text{aug}}$ is a PC over variables $\mbf{X}, \mbf{Z}$ and retains structured decomposability and smoothness. Further, the standard marginalization algorithm for PCs ensures that the augmented PC has the correct distribution:

\begin{restatable}{proposition}{propAug}
    $p_{\pc}(\mbf{X}) = \sum_{\mbf{z}} p_{\pc_{\text{aug}}}(\mbf{X}, \mbf{z})$
\end{restatable}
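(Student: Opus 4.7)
The plan is to prove this by structural induction on the augmented circuit from leaves up to the root. Rather than reasoning directly about the full product distribution, I would establish the stronger claim that for every node $t$ of $\pc$ with corresponding augmented node $t'$ in $\pc_{\text{aug}}$, if $t$ has scope $\mbf{X}_v$ for some vtree node $v$, then $\sum_{\mbf{z}_v} p_{t'}(\mbf{x}_v, \mbf{z}_v) = p_t(\mbf{x}_v)$, where $\mbf{z}_v$ denotes the tuple of values of $Z_{v'}$ ranging over inner nodes $v'$ of $V_v$. The proposition is then the special case obtained by applying this claim at the root of $\pc$.

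The base case is immediate: each leaf of $\pc$ is copied verbatim into $\pc_{\text{aug}}$, and its scope is a single $X$ with no associated latent, so there is nothing to sum. For the sum-node inductive step, if $t$ is a sum with weights $w_{t,c}$, then $t'$ has the same weights and its children are the augmented versions of those of $t$. Linearity of summation together with the induction hypothesis applied to each child immediately yields $\sum_{\mbf{z}_v} p_{t'}(\mbf{x}_v, \mbf{z}_v) = \sum_c w_{t,c}\,p_c(\mbf{x}_v) = p_t(\mbf{x}_v)$.

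The interesting case is the product node $t \in \prods(v)$ with original children $c_l, c_r$ whose scopes are $\mbf{X}_l$ and $\mbf{X}_r$. By construction, $t'$ has a third child $t_{\text{aug}}$ with leaf function $\mathds{1}_{Z_v = \text{idx}(t)}$, so $p_{t'}(\mbf{x}_v, \mbf{z}_v) = \mathds{1}_{z_v = \text{idx}(t)} \cdot p_{c_l'}(\mbf{x}_l, \mbf{z}_l) \cdot p_{c_r'}(\mbf{x}_r, \mbf{z}_r)$, where the latent tuple decomposes as $\mbf{z}_v = \{z_v\} \cup \mbf{z}_l \cup \mbf{z}_r$. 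Three observations then close the argument: (i) $\sum_{z_v} \mathds{1}_{z_v = \text{idx}(t)} = 1$; (ii) the latent variables $\mbf{z}_l$ appear only inside $c_l'$ and $\mbf{z}_r$ only inside $c_r'$, so the sum factorizes; (iii) applying the induction hypothesis to $c_l$ and $c_r$ collapses the factored sums to $p_{c_l}(\mbf{x}_l)$ and $p_{c_r}(\mbf{x}_r)$, whose product is $p_t(\mbf{x}_v)$.

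The main technical obstacle is justifying observation (ii), namely that the augmented scopes of the subcircuits rooted at $c_l'$ and $c_r'$ are disjoint in their latent parts. I would prove this as a short auxiliary scope-containment lemma by induction: a latent $Z_{v'}$ is introduced only as a child of a product in $\prods(v')$, so by structured decomposability it only appears inside a subcircuit whose scope is a superset of $\mbf{X}_{v'}$, which forces it to lie strictly on one side of the $(l, r)$ split whenever $v' \neq v$. Since $V_l$ and $V_r$ contain disjoint sets of inner vtree nodes, the induced latent scopes on the two sides are disjoint, and the factorization in step (ii) is legal. Once this lemma is in place the rest of the proof is routine.
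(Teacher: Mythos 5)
Your proof is correct, but it takes a more self-contained route than the paper. The paper's proof first establishes that $\pc_{\text{aug}}$ is decomposable and smooth (arguing that the new latent leaves with scope $\{Z_v\}$ cannot violate either property, precisely because of structured decomposability of $\pc$), and then simply invokes the correctness of the standard PC marginalization algorithm: replacing every leaf over $\mbf{Z}$ by the constant $1$ — which is what $\sum_{z_v}\mathds{1}_{z_v=\text{idx}(t)}=1$ amounts to — yields the marginal. You instead unroll that citation into a direct structural induction, proving the stronger node-level claim $\sum_{\mbf{z}_v} p_{t'}(\mbf{x}_v,\mbf{z}_v)=p_t(\mbf{x}_v)$. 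The two arguments rest on the same facts: your scope-containment lemma (latents of $V_l$ and $V_r$ land in disjoint subcircuits) is exactly decomposability of the augmented product nodes, and your implicit observation that every child of a sum in $\prods(v)$ carries the same latent scope (the inner nodes of $V_v$) is exactly smoothness of the augmented circuit — without it, distributing $\sum_{\mbf{z}_v}$ across the mixture in the sum-node case would not be legitimate, so it is good that your formulation of the inductive claim pins down the latent scope exactly rather than as a mere subset. What your version buys is independence from the cited marginalization result; what the paper's version buys is brevity and reuse of a standard lemma. Either is acceptable.
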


Let $\vtree_{\vtreenode \to Z_{\vtreenode}}$ be the rooted DAG obtained by replacing all inner nodes $\vtreenode$ in vtree $\vtree$ with variable $Z_v$ (cf. Fig. \ref{fig:one}). Now, we claim that the augmented PC can be interpreted as a Bayesian network with graph structure $\vtree_{\vtreenode \to Z_{\vtreenode}}$. To do this, we construct a distribution $p^{*}(\mbf{X}, \mbf{Z})$, based on the augmented PC, that factorizes as required by the Bayesian network structure. 
\begin{figure}
    \centering
    \begin{subfigure}[b]{0.49\linewidth}
    \centering
    \includegraphics[width=0.85\linewidth]{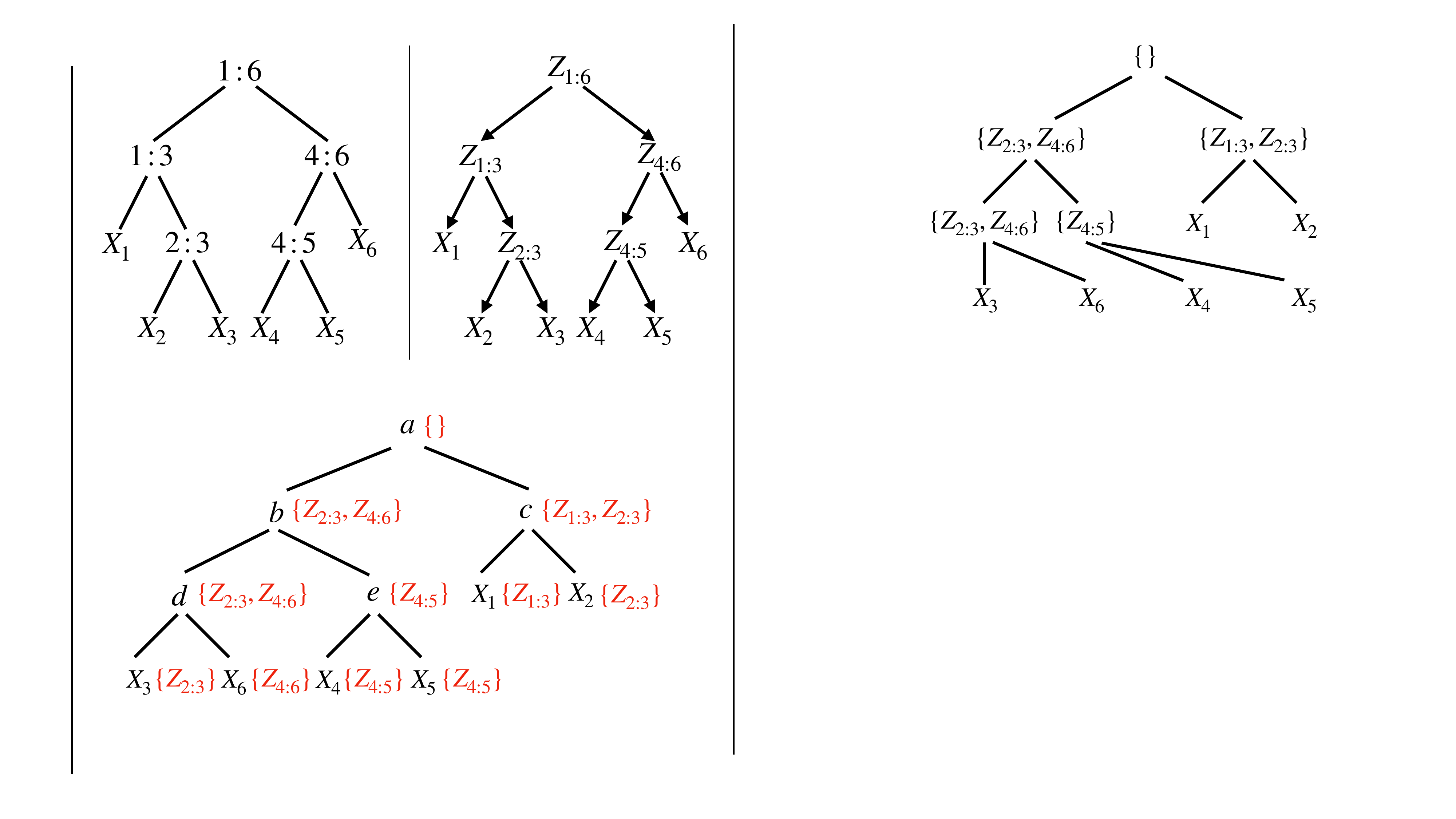}
    \caption{A (contiguous) vtree $V$}
    \label{fig:vtree}
    \end{subfigure}
    \begin{subfigure}[b]{0.49\linewidth}
    \centering
    \includegraphics[width=0.84\linewidth]{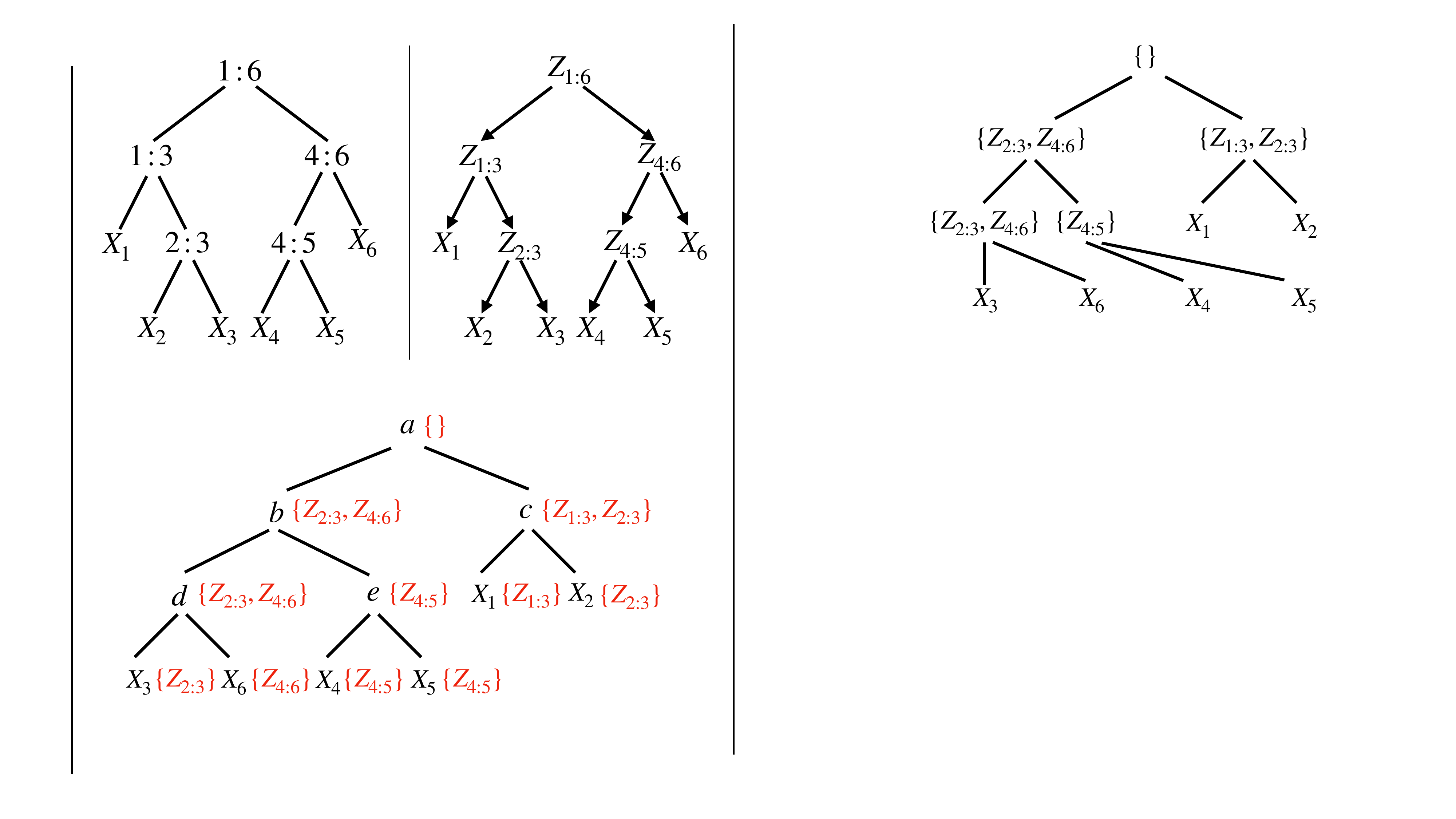}
    \caption{Bayesian network $V_{v\mapsto Z_v}$}
    \label{fig:bn}
    \end{subfigure}\\
    \hfill \\
    \begin{subfigure}[b]{0.95\linewidth}
    \centering
    \includegraphics[width=0.9\linewidth]{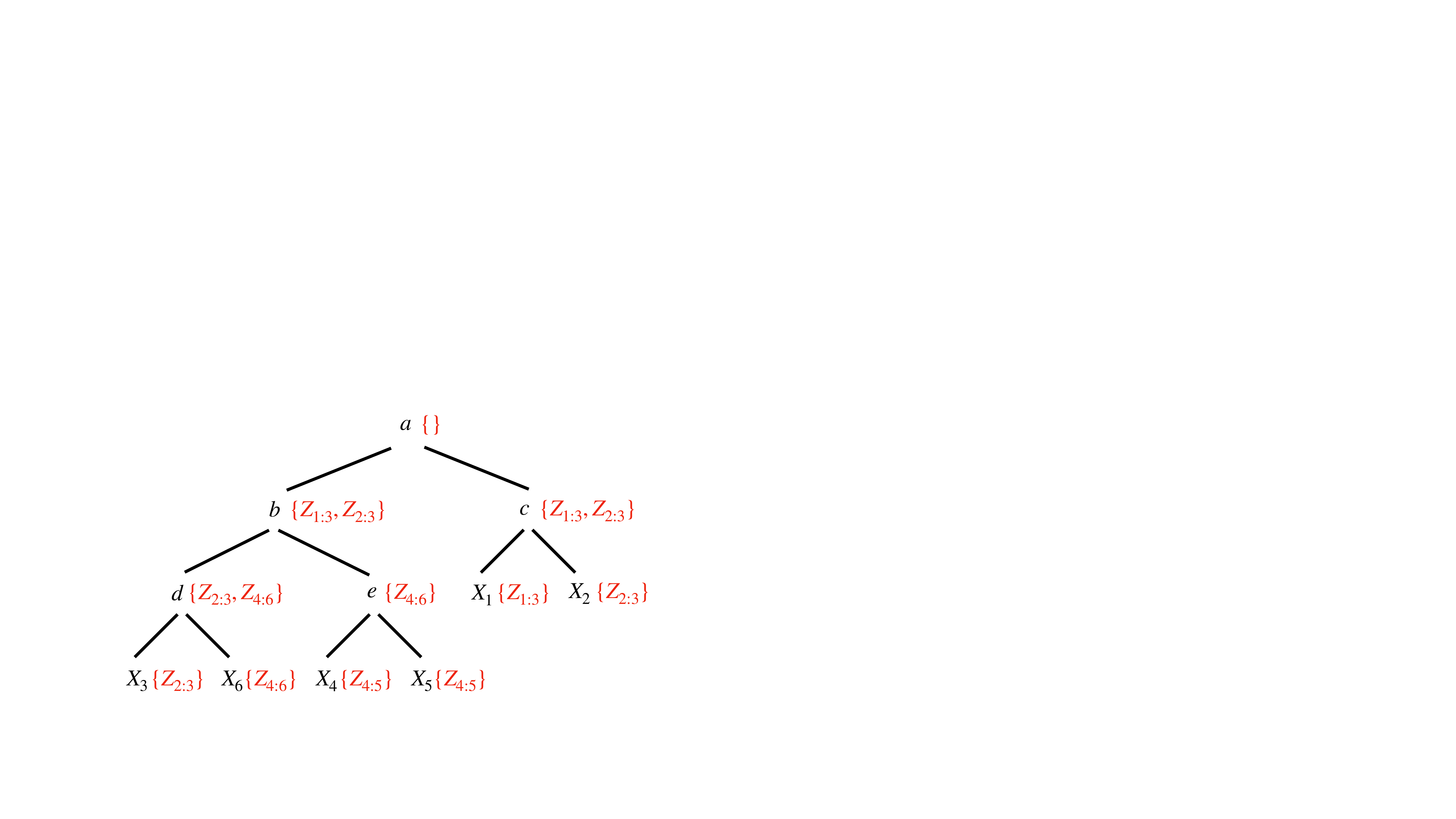}
    \caption{A \textcolor{red}{labelling} of vtree $W$.}
    \label{fig:labelling}
    \end{subfigure}
    \caption{Fig.~\ref{fig:vtree} shows a vtree $V$ for some contiguous PC $\mscr{A}$; Fig.~\ref{fig:bn} shows a Bayesian network representation $G_{\mscr{A}}$ for $\mscr{A}$; Fig.~\ref{fig:labelling} shows a valid labelling of vtree $W$ with respect to $G_{\mscr{A}}$.}
    \label{fig:one}
\end{figure}
There are three cases to consider: (i) the root node $p^{*}(Z_{\textsc{root}(V)
})$, 
(ii) the leaf nodes $p^{*}(X_{v}|Z_p)$, and (iii) other nodes $p^{*}(Z_v|Z_p)$ (where we write $p$ for the parent of $v$ in $V$). 
In case (i), we set $p^{*}(Z_v = i) := w_i$ where $w_i$ is the weight of the edge from the root sum node to the product node $t_{v, i}$. In case (ii), we set $p^{*}(X_{v}|Z_p = j) = p_t(X_{v})$, where $t$ is the leaf node child (with scope $X_{v}$) of the product node $t_{p, j}$.
Finally, in case (iii) we note that due to alternating sums and products, $t_{p, j}$ must have a sum node child, which may or may not have a weighted edge to $t_{v, i}$ (whose weight we denote by $w_{ij}$ if it exists). We thus define:
\begin{align*}
    p^{*}(Z_v = i| Z_p = j) = 
    \begin{cases}
        w_{ij} & \exists \text{ path from } t_{p, j} \text{ to } t_{v, i} \\
        0 & \text{otherwise}
    \end{cases}
\end{align*}

It remains to show that this distribution faithfully represents the distribution of the augmented PC, i.e. $p_{\pc_{\text{aug}}} = p^{*}$. The intuitive idea is that each value of $\mbf{Z}$ corresponds to a subtree of $\pc_{\text{aug}}$, whose value is precisely given by the product of weights and leaf functions specified by the Bayesian network; we refer readers to the Appendix for the complete proof. We thus have the following mapping from structured PCs to tree-shaped Bayesian networks:
\begin{restatable}{theorem}{thmPCBN}
    Let $\pc$ be a structured-decomposable and smooth PC over variables $\mbf{X}$ respecting vtree $V$. Then there exists a Bayesian network $G_{\mscr{A}}$ over variables $\mbf{X}$ and $\mbf{Z} = \{Z_v| v \in V\}$ with graph $V_{v \mapsto Z_v}$ such that $\sum_{z} p_{G}(\mbf{X}, \bm{z}) = p_{\pc}(\mbf{X})$.
\end{restatable}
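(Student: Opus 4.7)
The plan is to take $G_{\mscr{A}}$ to be the Bayesian network whose DAG is $V_{v \mapsto Z_v}$ and whose CPTs are the $p^{*}$ factors already defined in the paragraphs preceding the theorem. The theorem will then follow in three moves: (i) verify that these factors are bona fide conditional distributions, so $p^{*}$ is a genuine Bayesian network distribution; (ii) prove the pointwise identity $p_{\pc_{\text{aug}}}(\mbf{x},\mbf{z}) = p^{*}(\mbf{x},\mbf{z})$; (iii) marginalize $\mbf{z}$ and invoke the earlier proposition $p_{\pc}(\mbf{X}) = \sum_{\mbf{z}} p_{\pc_{\text{aug}}}(\mbf{X},\mbf{z})$ to finish.

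For (i), I would check each case separately. At the root, $\sum_i w_i = 1$ because the root of $\pc$ is a sum node. At an internal vtree node $v$ with parent $p$ and any $j$ fixing $t_{p,j}$, smoothness together with the alternating sum/product convention guarantees a unique sum-node child of $t_{p,j}$ with scope $\mbf{X}_v$; its outgoing weights sum to one and are zero for indices $i$ not corresponding to actual children of that sum node, matching the ``path exists'' clause in the definition of $p^{*}(Z_v = i \mid Z_p = j)$. At leaves, $p^{*}(X_v \mid Z_p = j) = p_t(X_v)$ is a valid univariate distribution by assumption.

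The main step is (ii), and this is where I expect the bookkeeping to be delicate. The key observation is that a joint assignment $\mbf{z}$ uses the indicator leaves $\mathds{1}_{Z_v = z_v}$ to pick out, at every reachable sum node in $\pc_{\text{aug}}$, a single surviving product child $t_{v, z_v}$. These selections trace a ``skeleton'' subtree of $\pc_{\text{aug}}$ isomorphic to $V$. Evaluating the augmented circuit on $(\mbf{x},\mbf{z})$ reduces to multiplying (a) the root sum weight $w_{z_{\textsc{root}(V)}}$, (b) for each non-root vtree edge $(p,v)$, the weight $w_{z_v z_p}$ of the edge from the relevant sum child of $t_{p,z_p}$ to $t_{v,z_v}$ (or $0$ when no such edge exists), and (c) for each variable leaf $v$ of $V$, the univariate value $f_t(x_v)$ where $t$ is the leaf child (with scope $X_v$) of $t_{p, z_p}$. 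Grouping these factors by vtree node yields exactly $p^{*}(Z_{\textsc{root}}) \prod_{v \neq \textsc{root}} p^{*}(Z_v \mid Z_p) \prod_{v \text{ leaf}} p^{*}(X_v \mid Z_p)$, which is $p^{*}(\mbf{x},\mbf{z})$. The zero case is consistent on both sides: a missing edge forces both $p^{*}(Z_v = z_v \mid Z_p = z_p) = 0$ and the indicator leaf product at the corresponding sum node in $\pc_{\text{aug}}$ to vanish.

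The hard part is really just making the correspondence between ``edges of the sum layer of $\pc$ attached to $\prods(v)$'' and ``the CPT entries $p^{*}(Z_v \mid Z_p)$'' airtight, especially pinning down the unique intermediate sum node between parent and child product-node scopes and ensuring that different parents $t_{p,j}, t_{p,j'}$ really do give independent rows of the CPT. Once (ii) is in hand, (iii) is immediate: $p_{G_{\mscr{A}}} = p^{*} = p_{\pc_{\text{aug}}}$, so
\[
    \sum_{\mbf{z}} p_{G_{\mscr{A}}}(\mbf{X}, \mbf{z}) \;=\; \sum_{\mbf{z}} p_{\pc_{\text{aug}}}(\mbf{X},\mbf{z}) \;=\; p_{\pc}(\mbf{X}),
\]
completing the proof.
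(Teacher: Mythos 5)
Your proposal is correct and follows essentially the same route as the paper: define $G_{\mscr{A}}$ via the $p^{*}$ CPTs, establish the pointwise identity $p_{\pc_{\text{aug}}}(\mbf{x},\mbf{z}) = p^{*}(\mbf{x},\mbf{z})$ by observing that each assignment $\mbf{z}$ selects (via the indicator leaves) a unique surviving subtree of the augmented circuit, with both sides vanishing when $\mbf{z}$ is inconsistent with the circuit's connectivity, and then conclude by the marginalization proposition. The only difference is one of packaging: the paper makes your ``skeleton subtree'' step rigorous by invoking the induced-tree decomposition of \citet{zhao2016unified}, under which exactly one induced tree survives the indicators for a consistent $\mbf{z}$ and none for an inconsistent one, which is precisely the delicate bookkeeping you flagged.
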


Since we have shown that $p_{\mscr{A}}$ and $p_{G}$ represents the same distribution over the observed variables $\mbf{X}$, we will drop the subscripts when there is no ambiguity.

\subsection{Recursive PC Restructuring}
Suppose we have a PC $\mscr{A}$ with its Bayesian network representation $G_{\mscr{A}}$ and vtree $V$, and let $W$ be some other vtree. We now show how to construct a new PC respecting $W$ that encodes the same distribution as $\mscr{A}$. The rough idea is to label each vtree node $w\!\in\!W$ with a subset of latent variables $\mbf{C}_w\!\subseteq\!G_{\mscr{A}}$ such that $\mbf{X}_w$ is conditionally independent from $\mbf{X} \setminus \mbf{X}_w$ given $\mbf{C}_{w}$. To characterize such properties, we introduce \emph{covers}:
\begin{definition}[Blocked Path]
    Given a directed rooted tree, we say that a path $P$ is blocked by a set of nodes $\mbf{S}$ if $P \cap \mbf{S}\neq\varnothing$.
\end{definition}
\begin{definition}[Cover]
Given a tree-shaped Bayesian network $G_{\mscr{A}}$ as constructed in Sec.~\ref{subsec:pc_to_graphical}, we say that $\mbf{C} \subseteq \mbf{Z}$ \emph{covers} $\mbf{S} \subseteq \mbf{X}$ if $\mbf{C}$ blocks all paths between $\mbf{S}$ and $\mbf{X}\!\setminus\!\mbf{S}$ in $G_{\mscr{A}}$.
\end{definition}
Our definition of cover is a special case of \emph{d-separation}~\citep{geiger1990d}, which characterizes conditional independence for Bayesian networks:
\begin{proposition}[\citet{geiger1990d}]
$\mbf{A}, \mbf{B}\!\subseteq\!G_{\pc}$ are conditionally independent given $\mbf{C}\!\subseteq\!G_{\mscr{A}}$ if and only if $\mbf{C}$ blocks all paths between $\mbf{A}$ and $\mbf{B}$.\footnote{Defining blocked paths for DAGs requires considering \emph{colliders}, which do not occur in directed rooted trees.} In particular, if $\mbf{C}$ covers $\mbf{S}$ then $\mbf{S}$ and $\mbf{X}\!\setminus\!\mbf{S}$ are conditionally independent given $\mbf{C}$.
\end{proposition}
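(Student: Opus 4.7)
My plan is to derive the proposition as a specialization of the classical d-separation theorem of~\citet{geiger1990d} to directed rooted trees, and then read off the ``in particular'' consequence as an immediate corollary.

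First I would recall the general d-separation criterion: in an arbitrary Bayesian network, two sets of nodes $\mbf{A}$ and $\mbf{B}$ are conditionally independent given $\mbf{C}$ if and only if every undirected path between $\mbf{A}$ and $\mbf{B}$ is d-separated by $\mbf{C}$, meaning that either (i) the path contains a non-collider lying in $\mbf{C}$, or (ii) the path contains a collider none of whose descendants lies in $\mbf{C}$.

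Next I would observe that the graph $V_{v \mapsto Z_v}$ underlying $G_{\mscr{A}}$ is a directed rooted tree, so every non-root node has exactly one parent. Hence no node on any undirected path in the graph can be a collider, since a collider at $w$ would require two distinct arrows pointing into $w$, i.e.\ two parents. Condition (ii) is therefore vacuous, and condition (i) reduces to the requirement that the path contain some node of $\mbf{C}$---which is precisely our definition of ``blocked.'' Combining this with the general criterion yields the biconditional stated in the proposition.

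For the ``in particular'' clause, if $\mbf{C}$ covers $\mbf{S}$ then by the definition of a cover, $\mbf{C}$ blocks every path between $\mbf{S}$ and $\mbf{X}\!\setminus\!\mbf{S}$ in $G_{\mscr{A}}$, so applying the biconditional with $\mbf{A}=\mbf{S}$ and $\mbf{B}=\mbf{X}\!\setminus\!\mbf{S}$ delivers the stated independence. I do not anticipate a serious obstacle: the heart of the argument is the appeal to the classical theorem, and the only substantive step is the one-line observation that a rooted tree admits no colliders. The minor bookkeeping worth double-checking is that $G_{\mscr{A}}$, including the leaf arrows $Z_v \to X_v$ attaching each observed variable to its parent latent, remains a rooted tree, so that the reduction applies uniformly to subsets drawn from $\mbf{X}\cup\mbf{Z}$ as required by the statement.
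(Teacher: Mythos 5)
Your proposal is correct and matches what the paper does: the paper offers no explicit proof, citing \citet{geiger1990d} directly and relegating your key observation---that directed rooted trees contain no colliders, so d-separation degenerates to simple path-blocking---to a footnote. Your derivation of the ``in particular'' clause from the definition of a cover is the intended one-line corollary.
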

Our goal is to recursively construct vectors of sum nodes $\oplus_i$ representing the probability distributions $p(\mbf{X}_{w} \given \mbf{C}_w\!=\!i)$. Letting $l$ and $r$ be the children of $w$, we will establish a recurrence relation between $p(\mbf{X}_{w} \given \mbf{C}_w)$, $p(\mbf{X}_{l} \given \mbf{C}_l)$ and $p(\mbf{X}_{r} \given \mbf{C}_r)$. This requires the vtree labels to satisfy the following properties: 
\begin{definition}[Valid Vtree Labelling] \label{def:correct_labels}
    Given the Bayesian network $G_{\pc}$ and target vtree $W$, a valid \emph{labelling} of $W$ with respect to $G_{\pc}$ associates each node $w\!\in\!W$ with a subset of latent variables $\mbf{C}_w\!\subseteq\!G_{\vtree}$ s.t.
    \begin{enumerate}[noitemsep, leftmargin=*]
        \item $\mbf{C}_w$ covers $\mbf{X}_w$ in $G_{\pc}$.
        \item $\mbf{C}_l$ blocks all paths between $\mbf{X}_l$ and $\mbf{C}_r \cup \mbf{C}_w$.
        \item $\mbf{C}_r$ blocks all paths between $\mbf{X}_r$ and $\mbf{C}_l \cup \mbf{C}_w$.
    \end{enumerate}
Furthermore, w.l.o.g., we set $\mbf{C}_{\text{root of } W}\!:=\!\varnothing$ and $\mbf{C}_{X_j}\!:=\!\text{parent of } X_j \text{ in } G_{\pc}$ for the leaf nodes $X_j\!\in\!W$. See Figure \ref{fig:labelling} for an example.
\end{definition}

\begin{figure}
    \centering
    \includegraphics[width=0.95\linewidth]{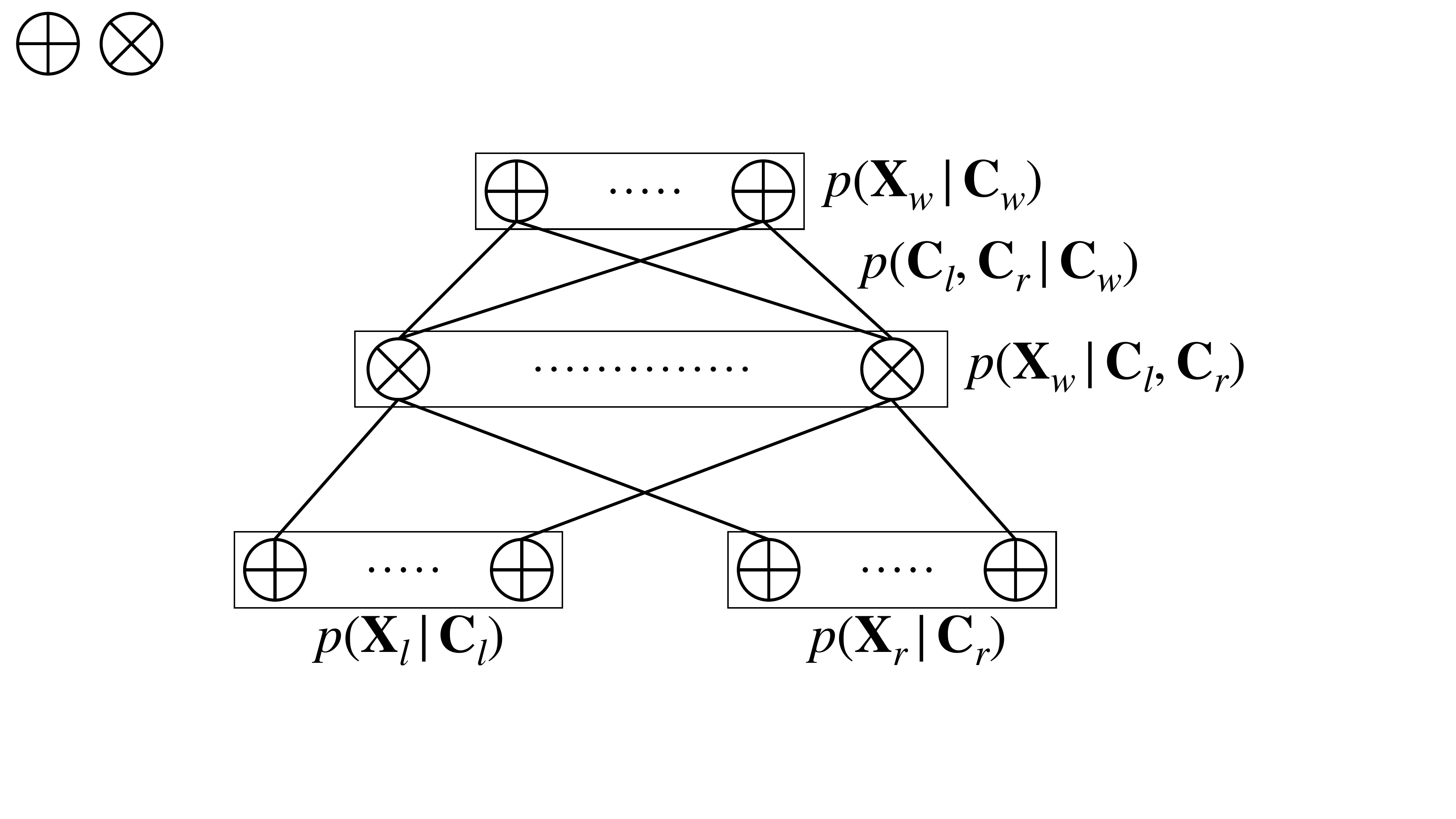}
    \caption{Recursive construction of vectors of sum nodes representing $p(\mbf{X}_w \given \mbf{C}_w)$}
    \label{fig:construction}
\end{figure}

Assuming that we have computed a valid labelling for $W$, we can then proceed to construct the desired PC by a bottom-up recursion on $W$. For the base case, if $w$ is a leaf node representing some random variable $X_j$, $p(X_j \given \mbf{C}_{X_j}) = p(X_j \given \text{parent of } X_j \text{ in } G_{\pc})$, which is directly given by the conditional probability table of $G_{\pc}$. For the induction step, when $w$ is a inner node with children $l$ and $r$, we have the recurrence relation:
\begin{align*}
&p(\mbf{X}_w \given \mbf{C}_w) \\
&= \sum_{(\mbf{C}_l \cup \mbf{C}_r) \setminus \mbf{C}_w}p(\mbf{X}_l, \mbf{X}_r \given \mbf{C}_l, \mbf{C}_r) \cdot p(\mbf{C}_l, \mbf{C}_r \given \mbf{C}_w) \\
&= \sum_{(\mbf{C}_l \cup \mbf{C}_r) \setminus \mbf{C}_w}p(\mbf{X}_l \given \mbf{C}_l) \cdot p(\mbf{X}_r \given \mbf{C}_r) \cdot p(\mbf{C}_l, \mbf{C}_r \given \mbf{C}_w)
\end{align*}
Here the first step follows from Property 2 and 3, and the second step follows from all properties in Defintion~\ref{def:correct_labels}.
The circuit materialization of the recurrence relation is shown in Figure~\ref{fig:construction}. Note that if $w$ is the root, then $p(\mbf{X}_w \given \mbf{C}_w)$ becomes $p(\mbf{X})$, which is a single sum node representing the distribution of $\mscr{A}$. The complete recursion is given by Algorithm~\ref{alg:construction}.
\begin{algorithm}
\caption{Construct PC with respect to $W$}\label{alg:construction}
\begin{algorithmic}
\Procedure{ConstructCircuit}{$w$}
    \If{$w$ is a leaf node $X_i$}
        \State \Return $p(X_i \given \mbf{C}_{X_i})$
    \EndIf
    \State $l, r \gets \textsc{Children}(w)$
    \State $\bigoplus_{\mbf{X}_l, \mbf{C}_l}\!\gets\!\textsc{ConstructCircuit}(l)$
    \State $\bigoplus_{\mbf{X}_r, \mbf{C}_r}\!\gets\!\textsc{ConstructCircuit}(r)$
    \State $\bigoplus_{\mbf{X}_w, \mbf{C}_w}\!\gets\!\sum\limits_{(\mbf{C}_l \cup \mbf{C}_r) \setminus \mbf{C_w}}\!\bigoplus_{\mbf{C}_l} \cdot \bigoplus_{\mbf{C}_r} \cdot p(\mbf{C}_l, \mbf{C}_r \given \mbf{C}_w)$
    \State \textbf{return} $\bigoplus_{\mbf{C}_w}$
\EndProcedure
\end{algorithmic}
\end{algorithm}

\begin{theorem}\label{theorem:pcsize}
Let $h$ be the number of hidden states of the original PC $\mscr{A}$ and $n$ the number of random variables. The number of hidden states of the restructured PC is given by $O(h^{M})$ where $M\!=\!{\max}_{w\in W} |\mbf{C}_l \cup \mbf{C}_r|$ and the size of the restructured PC is bounded by $O(nh^{M'})$ where $M'\!=\!\max_{w \in W} |\mbf{C}_l \cup \mbf{C}_{r} \cup \mbf{C}_w| \leq 2M$. We refer to $M^{\prime}$ as the cardinality of the labelling $\mbf{C}_w$.
\end{theorem}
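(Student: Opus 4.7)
The plan is to count, for each node $w$ of the target vtree $W$, the sum nodes, product nodes, and edges that Algorithm~\ref{alg:construction} produces, and then sum over the $O(n)$ nodes of $W$. The key enabling fact is that since $\mscr{A}$ has hidden state size $h$, each latent $Z_v \in \mbf{Z}$ has cardinality at most $h$ (it indexes into $\prods(v)$, which has size at most $h$), so any subset $\mbf{C} \subseteq \mbf{Z}$ admits at most $h^{|\mbf{C}|}$ joint assignments.

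At each $w$, the algorithm constructs one sum node per assignment to $\mbf{C}_w$ (representing $p(\mbf{X}_w \mid \mbf{C}_w)$), and one product node per joint assignment to $\mbf{C}_l \cup \mbf{C}_r$---two values of $\mbf{C}_l$ and $\mbf{C}_r$ that are consistent on the overlap $\mbf{C}_l \cap \mbf{C}_r$ share a single product via $\bigoplus_{\mbf{C}_l} \cdot \bigoplus_{\mbf{C}_r}$. This gives at most $h^{|\mbf{C}_w|}$ sums and $h^{|\mbf{C}_l \cup \mbf{C}_r|} \le h^{M}$ products at $w$; since hidden state size is defined as the maximum number of product nodes per scope, the $O(h^{M})$ bound on hidden states follows immediately. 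For edges, each product has two sum children (an $O(h^{|\mbf{C}_l \cup \mbf{C}_r|})$ contribution), while a sum indexed by $c \in \mbf{C}_w$ is connected to every product indexed by a pair $(\alpha,\beta)$ compatible with $c$ on $\mbf{C}_w \cap (\mbf{C}_l \cup \mbf{C}_r)$ (other pairs carry zero weight from the factor $p(\mbf{C}_l, \mbf{C}_r \mid \mbf{C}_w)$). The number of such consistent triples is exactly $h^{|\mbf{C}_w \cup \mbf{C}_l \cup \mbf{C}_r|} \le h^{M'}$. Summing over the $O(n)$ nodes of $W$ gives total size $O(n h^{M'})$.

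To complete the argument, I would show $M' \le 2M$ by bounding $|\mbf{C}_w| \le M$ for every $w$: if $w$ is the root, then $\mbf{C}_w = \varnothing$; otherwise $w$ is a child of some parent $w' \in W$, and $\mbf{C}_w$ is literally $\mbf{C}_{l'}$ or $\mbf{C}_{r'}$ in the labelling at $w'$, so $|\mbf{C}_w| \le |\mbf{C}_{l'} \cup \mbf{C}_{r'}| \le M$. The proof is largely bookkeeping; the one easy-to-miss point is the compression of the product count via $\mbf{C}_l \cap \mbf{C}_r$---a naive analysis would bound the products at $w$ by $h^{|\mbf{C}_l|} \cdot h^{|\mbf{C}_r|}$ rather than $h^{|\mbf{C}_l \cup \mbf{C}_r|}$, losing a factor that is crucial for the $O(h^{M})$ hidden state bound and the stated size bound.
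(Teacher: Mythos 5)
Your proposal is correct and follows essentially the same route as the paper's proof: counting, per inner node $w$ of $W$, the $h^{|\mbf{C}_l\cup\mbf{C}_r|}$ product nodes (yielding the $O(h^M)$ hidden-state bound) and the $h^{|\mbf{C}_l\cup\mbf{C}_r\cup\mbf{C}_w|}$ sum-layer edges, then summing over the $O(n)$ vtree nodes. You additionally spell out two details the paper leaves implicit --- the sharing of products across assignments consistent on $\mbf{C}_l\cap\mbf{C}_r$, and the argument that $|\mbf{C}_w|\leq M$ (since $\mbf{C}_w$ is the $\mbf{C}_{l'}$ or $\mbf{C}_{r'}$ of its parent) which gives $M'\leq 2M$ --- both of which are accurate.
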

\begin{proof}
Let $\mscr{A}^\prime$ be the restructured circuit respecting $W$. As described in Algorithm~\ref{alg:construction}, for each inner node $w \in W$, we construct two layers of nodes as shown in Figure~\ref{fig:construction}. By construction, the product layer contains all product nodes respecting the vtree node $w$ and its cardinality is given by $O(h^{|\mbf{C}_l \cup \mbf{C}_r|})$; we set $M\!:=\!\max_{w \in W} |\mbf{C}_l \cup \mbf{C}_r|$ and it follows that the hidden states size of $\mscr{B}$ is given by $O(h^M)$. Similarly, the number of edges in the sum layer is given by $O(h^{|\mbf{C}_l \cup \mbf{C}_r \cup \mbf{C}_w|})$ and the number of product edges is given by $O(h^{|\mbf{C}_l \cup \mbf{C}_r|})$; since there are $O(n)$ vtree nodes in total, the total number of edges in $\mscr{B}$ is given by $O(nh^{M^\prime})$, with $M^{\prime} = \max_{w \in W} |\mbf{C}_l \cup \mbf{C}_r \cup \mbf{C}_w|$.
\end{proof}

\begin{remark}
By Theorem~\ref{theorem:pcsize}, the restructured PC $\mscr{A}^\prime$ has hidden state size $O(h^{M})$, which gives a circuit of size $\Theta(nh^{2M})$ only if $\mscr{A}^\prime$ is densely connected. In fact, we will show in Section~\ref{sec:multiplication} and~\ref{sec:depth_reduction} that the restructured PCs are often sparsely connected, resulting in sizes much smaller than $O(nh^{2M})$. Thus, while the graphical model representation is useful for reasoning about conditional independencies, the circuit representation allows us to visualize and exploit the sparsity for efficient inference~\citep{DangNeurIPS22, LiuICML24}.
\end{remark}

\subsection{Computing Vtree Labelling}
The next question that immediately arises is how to compute a valid labelling for $W$ with respect to $G_{\pc}$. One naive solution is to set $\mbf{C}_w$ to be $\mbf{Z}$, the set of all latent variables in $G_{\pc}$. However, this is not desirable as $M^\prime\!=\!\max_{w \in W} |\mbf{C}_l \cup \mbf{C}_r \cup \mbf{C}_w|\!=\!|\mbf{Z}|\!=\!n\!-\!1$, resulting in the restructured circuit having exponential size $O(nh^{n-1})$. Hence we present a greedy approach that computes a labelling while trying to minimize $M^\prime$.

The algorithm proceeds top-down on $W$. For the base case where $w$ is the root, we set $\mbf{C}_w := \varnothing$. For the inductive step, let $l$ and $r$ be the children of $w$ and assume that we have computed $\mbf{C}_w$ as a cover for $\mbf{X}_w$ in $G_{\pc}$: we (1) split $G_{\pc}$ into connected components $\{G_i\}$ via $\mbf{C}_w$; then (2) within each connected component $G_i$, we compute a minimum d-separator $\mbf{C}_i$ that blocks all paths between $\mbf{X}_l \cap G_i$ and $\mbf{X}_r \cap G_i$ by calling the sub-routine \textsc{MinimumSeparator}. We set $\mbf{D}_w := \left({\bigcup}_i \mbf{C}_i\right) \cup \mbf{C}_w$ and observe that $\mbf{D}_w$ covers both $\mbf{X}_l$ and $\mbf{X}_r$ in $G_{\pc}$. To compute $\mbf{C}_l$, similarly for $\mbf{C}_r$, we consider all paths starting from $\mbf{X}_l$ and stopping immediately when reaching some $Z_j \in \mbf{D}_w$, and we let $\mbf{C}_l$ to be the set containing all such $Z_j$s. The pseudo code is shown in Algorithm~\ref{alg:compute_label}.
\begin{algorithm}
\caption{Computing $\mbf{C}_w$ for $w\!\in\!W$}\label{alg:compute_label}
\begin{algorithmic}
\Procedure{ComputeLabel}{$w, \mbf{C}_w$}
    \State $\{G_i\} \gets \textsc{ConnectedComponents}(G_{\pc}, \mbf{C}_w)$
    \State $\mbf{C}_{i} \gets \textsc{MinimumSeparator}(G_i, \mbf{X}_l\!\cap\!G_i, \mbf{X}_r\!\cap\!G_i)$
    \State $\mbf{D}_w \gets \left({\bigcup}_i \mbf{C}_i\right) \cup \mbf{C}_w$
    \State $\mbf{C}_l \gets \{Z_j\!\in\!\mbf{D}_w: \textsc{Paths}(\mbf{X}_l, Z_j) \cap \mbf{D}_w\!=\!\{Z_j\} \}$
    \State $\mbf{C}_r \gets \{Z_j\!\in\!\mbf{D}_w: \textsc{Paths}(\mbf{X}_r, Z_j) \cap \mbf{D}_w\!=\!\{Z_j\} \}$
    \State $\textsc{ComputeLabel}(l, \mbf{C}_l)$
    \State $\textsc{ComputeLabel}(r, \mbf{C}_r)$
\EndProcedure
\end{algorithmic}
\end{algorithm}
Note that the \textsc{MinimumSeparator} procedure called in Algorithm~\ref{alg:compute_label} computes a minimum d-separator that blocks all paths between $\mbf{X}_l$ and $\mbf{X}_r$ in the subgraph $G_i$. Even though polytime algorithms for computing minimum d-separators exist in literature~\citep{tian1998finding}, we derive a linear-time algorithm that is easy to implement for our use case, where $G_i$ is a rooted tree with leaves in $\mbf{X}_l$, $\mbf{X}_r$ and $\mbf{C}_w$. We refer readers to the Appendix for details.
\begin{proposition}
Algorithm~\ref{alg:compute_label} computes a valid labelling with respect to $G_{\mscr{A}}$.
\end{proposition}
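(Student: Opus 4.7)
The plan is to proceed by top-down induction on $W$, maintaining as induction hypothesis that whenever \textsc{ComputeLabel}$(w, \mbf{C}_w)$ is invoked, $\mbf{C}_w$ already covers $\mbf{X}_w$ in $G_{\pc}$ (i.e.\ Property~1 of Definition~\ref{def:correct_labels} holds at the argument). At each call I must verify that the child labels $\mbf{C}_l, \mbf{C}_r$ produced by the algorithm (i) continue to cover $\mbf{X}_l$ and $\mbf{X}_r$ respectively, preserving the induction hypothesis for the recursive calls, and (ii) satisfy Properties~2 and 3 at the current node $w$. The base case is immediate: the root is invoked with $\mbf{C}_w = \varnothing$, which covers $\mbf{X}_w = \mbf{X}$ vacuously; and for a leaf $X_j$ the assignment $\mbf{C}_{X_j}$ equal to the unique parent of $X_j$ in the tree $G_{\pc}$ disconnects $X_j$ from the rest of the graph.

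For the inductive step, the technical heart is a single lemma: the set $\mbf{D}_w := \mbf{C}_w \cup \bigcup_i \mbf{C}_i$ blocks every path in $G_{\pc}$ with one endpoint in $\mbf{X}_l$ and the other outside $\mbf{X}_l$, and symmetrically for $\mbf{X}_r$. I would prove this by case analysis on such a path $P$: either $P$ meets $\mbf{C}_w$, and we are done; or $P$ lies entirely in a single connected component $G_i$ of $G_{\pc} \setminus \mbf{C}_w$. In the latter case, the far endpoint of $P$ cannot lie in $\mbf{X} \setminus \mbf{X}_w$, for that would force $P$ to leave $G_i$ and hence cross $\mbf{C}_w$ (using the inductive hypothesis that $\mbf{C}_w$ covers $\mbf{X}_w$); so this endpoint must lie in $\mbf{X}_r \cap G_i$, and by construction \textsc{MinimumSeparator}$(G_i, \mbf{X}_l \cap G_i, \mbf{X}_r \cap G_i)$ returns a $\mbf{C}_i$ that blocks $P$.

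Given the lemma, both remaining obligations for $l$ fall out from the definition of $\mbf{C}_l$ as the ``first-hit'' subset of $\mbf{D}_w$ reachable from $\mbf{X}_l$. For (i), any path from $\mbf{X}_l$ to $\mbf{X} \setminus \mbf{X}_l$ meets $\mbf{D}_w$ by the lemma, and its earliest intersection is in $\mbf{C}_l$ by construction, so $\mbf{C}_l$ covers $\mbf{X}_l$. For (ii), any path from $\mbf{X}_l$ to $\mbf{C}_r \cup \mbf{C}_w \subseteq \mbf{D}_w$ also meets $\mbf{D}_w$ (at worst at its endpoint), and its earliest intersection again lies in $\mbf{C}_l$, establishing Property~2. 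Property~3 and the cover claim for $\mbf{C}_r$ follow by a symmetric argument.

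The main obstacle I expect is the key lemma, specifically ruling out cross-component paths between $\mbf{X}_l$ and $\mbf{X}_r$ that avoid $\mbf{C}_w$: this is precisely where the tree structure of $G_{\pc}$ and the inductive cover hypothesis on $\mbf{C}_w$ must be combined carefully. Everything else is essentially bookkeeping about first-hit nodes along root-to-leaf paths; the only minor care point is that Definition~\ref{def:correct_labels}'s notion of a blocked path is mere non-empty intersection with the blocking set, so paths whose endpoints themselves belong to $\mbf{C}_l$ are blocked trivially and pose no issue.
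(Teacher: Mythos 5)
Your proof is correct and follows essentially the same route as the paper's: a top-down induction on $W$ maintaining that $\mbf{C}_w$ covers $\mbf{X}_w$, a case split on whether a path stays within one component $G_i$ (blocked by $\mbf{C}_i$) or crosses $\mbf{C}_w$, and the first-hit property of $\mbf{C}_l \subseteq \mbf{D}_w$ to conclude Properties 1--3. Your version is in fact slightly more explicit than the paper's in justifying why a same-component path from $\mbf{X}_l$ must terminate in $\mbf{X}_r \cap G_i$, which is a welcome clarification rather than a deviation.
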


\begin{proof}
We prove by a top-down induction on $W$ that the labelling $\mbf{C}_w$ computed by Algorithm~\ref{alg:compute_label} is valid. Assume that $\mbf{C}_w$ covers $\mbf{X}_w$ in $G_{\pc}$, we want to show that $\mbf{C}_l$ and $\mbf{C}_r$ satisfy the properties from Definition~\ref{def:correct_labels}. To prove that $\mbf{C}_l$ covers $\mbf{X}_l$, we consider a path from $X_a\!\in\!\mbf{X}_l$ to $X_b\!\in\!\mbf{X} \setminus \mbf{X}_l$. (1) If $X_a$ and $X_b$ are in the same $G_i$, then the path is blocked by $\mbf{C}_i$. (2) If $X_a$ and $X_b$ are in different $G_i$s, then the path contains some node $Z \in \mbf{C}_w$, and we can choose from the path the first $Z\!\in\!\mbf{C}_w$. Then $Z\!\in\!\mbf{C}_l$ by construction, implying that the path is blocked by $\mbf{C}_l$. Hence we conclude that $\mbf{C}_l$ is a cover for $\mbf{X}_l$, satisfying Property~1. To prove that $\mbf{C}_l$ satisfies Property 2, we argue that because $\mbf{C}_r$ and $\mbf{C}_w$ are both subsets of $\mbf{D}_w$, all paths from $\mbf{X}_l$ to $\mbf{C}_r \cup \mbf{C}_w$ will be blocked by $\mbf{C}_l$ by the way that $\mbf{C}_l$ is constructed. We can show that $\mbf{C}_r$ satisfies Property~1~and~3 by the same argument.
\end{proof}

As an example, we (partially) illustrate the application of Algorithm \ref{alg:compute_label} to the target vtree in Figure \ref{fig:labelling}. For node $a$ and its children $b, c$, the only connected component is $G_1 := G_{\mscr{A}}$. A minimum separator of $\mbf{X}_b = \{X_3, X_6, X_4, X_5\}$ and $\mbf{X}_c = \{X_1, X_2\}$ is then given by $\mbf{C}_1 := \{Z_{1:3}, Z_{2:3}\}$. Then $\mbf{D}_1 = \mbf{C}_1 \cup \mbf{C}_a = \{Z_{1:3}, Z_{2:3}\}$ covers both $\mbf{X}_b, \mbf{X}_c$. Tracing the paths from each $X$ variable to $\mbf{D}_1$ finally gives $\mbf{C}_b = \mbf{C}_c = \{Z_{1:3}, Z_{2:3}\}$.

Though Algorithm~\ref{alg:compute_label} computes a valid labelling while greedily minimizing $|\mbf{C}_l \cup \mbf{C}_r \cup \mbf{C}_w|$, we do not know whether $M^{\prime} = \max_{w \in W} |\mbf{C}_l \cup \mbf{C}_r \cup \mbf{C}_w|$ is globally minimized or not. In addition, we hypothesize that if we can find a minimum vtree labelling, then the size of the PC constructed by Algorithm~\ref{alg:construction} is optimal. We leave it as an open problem to design an algorithm that computes \emph{minimum} labellings and prove the optimality of Algorithm~\ref{alg:construction} given a minimum labelling.

Nonetheless we show that Algorithm~\ref{alg:construction} yields novel polynomial-time algorithms for the tasks of PC multiplication and depth-reduction. Specifically, we show that for important subclasses of PCs, we \emph{can} compute vtree labellings of constant or $O(\log n)$ cardinality. We refer readers to Section~\ref{sec:multiplication} and Section~\ref{sec:depth_reduction} for details.

\subsection{Corollaries}
With our restructuring algorithm in hand, we now examine the restructuring of two other types of circuits: namely, deterministic PCs, and logical circuits. 
\begin{definition}[Determinism]
    A sum node is deterministic if for every value $\mbf{x}$ of $\mbf{X}$, at most one child $c$ returns a non-zero value (i.e. $p_{c}(\mbf{x}) > 0$). A PC is determinstic if all of its sum nodes are deterministic.
\end{definition}

Determinism is crucial for tractability of various inference queries such as computing the most likely state (MAP) \citep{peharz2016latent,conaty2017approximation} or computing the entropy of the PC's distribution \citep{shih2020probabilistic,VergariNeurIPS21}. It is thus of interest to ask whether applying our restructuring algorithm maintains determinism. 
\begin{claim}
    Algorithm~\ref{alg:construction} preserves determinism.
\end{claim}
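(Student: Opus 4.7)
The plan is to induct on the target vtree $W$ and show every sum node produced by Algorithm~\ref{alg:construction} is deterministic. The base case is immediate: at a leaf $w = X_j$ the returned sub-circuit is just the CPD $p(X_j \mid \mbf{C}_{X_j})$ read off from $G_{\mscr{A}}$ and contains no sum nodes. For the inductive step at an inner node $w$ with children $l, r$, the sum computes $p(\mbf{X}_w \mid \mbf{C}_w = \mbf{c}_w)$ with one product child per extension $(\mbf{c}_l, \mbf{c}_r)$; by the labelling-induced factorization (Definition~\ref{def:correct_labels}) this child evaluates on $\mbf{x}_w$ to $p(\mbf{x}_l \mid \mbf{c}_l) \cdot p(\mbf{x}_r \mid \mbf{c}_r) \cdot p(\mbf{c}_l, \mbf{c}_r \mid \mbf{c}_w) = p(\mbf{x}_w, \mbf{c}_l, \mbf{c}_r \mid \mbf{c}_w)$ in $G_{\mscr{A}}$. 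Determinism at the sum therefore reduces to showing that, given $\mbf{x}_w$ and $\mbf{c}_w$ in the support, at most one extension $(\mbf{c}_l, \mbf{c}_r)$ makes this joint positive.

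I would establish this by contradiction, using the fact that determinism of $\mscr{A}$ is equivalent, in the BN $G_{\mscr{A}}$, to each $\mbf{x}$ in the support having a unique latent trace $\mbf{z}$ with $p^*(\mbf{x}, \mbf{z}) > 0$; this follows by unrolling the unique-active-child property through the alternating sum/product layers. Suppose two extensions $(\mbf{c}_l, \mbf{c}_r) \neq (\mbf{c}_l', \mbf{c}_r')$ are both positive. The next step is to promote them to two full traces $\mbf{z}^{(1)}, \mbf{z}^{(2)}$ sharing a common $\mbf{x} = (\mbf{x}_w, \mbf{x}_{-w}^{*})$; once this is done, uniqueness of the trace forces $\mbf{z}^{(1)} = \mbf{z}^{(2)}$, and hence $(\mbf{c}_l, \mbf{c}_r) = (\mbf{c}_l', \mbf{c}_r')$, contradiction.

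The key lemma enabling this swap is the conditional independence $\mbf{X}_{-w} \perp \{\mbf{X}_w, \mbf{C}_l, \mbf{C}_r\} \mid \mbf{C}_w$ in $G_{\mscr{A}}$. Given this, $p^*(\mbf{x}_w, \mbf{x}_{-w}^*, \mbf{c}_l, \mbf{c}_r, \mbf{c}_w)$ is positive for any $\mbf{x}_{-w}^*$ with $p(\mbf{x}_{-w}^* \mid \mbf{c}_w) > 0$, and similarly for the primed extension, supplying the common $\mbf{x}$ above. I expect establishing this CI to be the main obstacle. The part $\mbf{X}_w \perp \mbf{X}_{-w} \mid \mbf{C}_w$ is immediate from Property~1 of Definition~\ref{def:correct_labels}, but lifting it to include the latent sets $\mbf{C}_l, \mbf{C}_r$ uses the fact that Algorithm~\ref{alg:compute_label} constructs them as first-hit nodes of $\mbf{D}_w \supseteq \mbf{C}_w$ along tree paths starting in $\mbf{X}_l$ or $\mbf{X}_r$. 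A case analysis in the tree-shaped BN on the divergence point of the paths $v \to u$ and $X' \to u$, for $v \in \mbf{C}_l$, $u \in \mbf{X}_{-w}$, and $X' \in \mbf{X}_l$, shows that any $c \in \mbf{C}_w$ blocking $X' \to u$ must also lie on $v \to u$ (else it would give a $\mbf{D}_w$ hit before $v$, contradicting first-hit-ness). Once the CI is in hand, the remaining argument is purely mechanical.
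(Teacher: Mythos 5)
Your proof is correct and, at its core, rests on the same observation as the paper's: determinism of $\mscr{A}$ means every $\mbf{x}$ in the support admits a unique latent trace $\mbf{z}$ with $p^*(\mbf{x},\mbf{z})>0$. The paper stops there, in a single sentence. What you do differently is notice that this alone does not finish the job: the sum node for $p(\mbf{X}_w \mid \mbf{C}_w)$ only observes $\mbf{x}_w$, so two active children would only yield two positive assignments $(\mbf{x}_w,\mbf{x}_{-w},\mbf{c}_l,\mbf{c}_r)$ and $(\mbf{x}_w,\mbf{x}'_{-w},\mbf{c}'_l,\mbf{c}'_r)$ with possibly \emph{different} completions of the observed variables, to which trace-uniqueness does not directly apply. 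Your conditional-independence lemma $\mbf{X}_{-w} \perp \{\mbf{X}_w, \mbf{C}_l, \mbf{C}_r\} \mid \mbf{C}_w$ is exactly the right patch: it lets you splice a common $\mbf{x}_{-w}^*$ onto both extensions and then invoke uniqueness, and your first-hit argument for why paths from $\mbf{X}_{-w}$ to $\mbf{C}_l$ are blocked by $\mbf{C}_w$ is sound (in the tree, a $\mbf{C}_w$-node on the path $X' \to u$ either lies beyond the meeting point, hence on $v \to u$, or lies before $v$ on $X' \to v$, contradicting the first-hit construction). What your route buys, beyond rigor, is an honest accounting of what the claim depends on: the CI holds for labellings produced by Algorithm~\ref{alg:compute_label} but is not implied by the bare conditions of Definition~\ref{def:correct_labels} (a valid labelling may include in $\mbf{C}_l$ a latent on the far side of $\mbf{C}_w$, which is not determined by $\mbf{x}_w$ and $\mbf{c}_w$ and would break determinism of the constructed sum node). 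So your proof both fills a genuine gap in the paper's one-line argument and sharpens the claim's hypotheses; the only cosmetic quibble is that you assert determinism is \emph{equivalent} to trace-uniqueness when you only need (and only justify) the forward implication.
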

\begin{proof}    
    If the original circuit is deterministic, then each assignment to the observed variables fully determines the values of all latent variables (and thus the latents being conditioned on for the restructuring). Hence the constructed sum nodes are deterministic.
\end{proof}

Although we have focused on probabilistic circuits up to this point, our restructuring algorithm also applies to logical circuits -  in particular, structured-decomposable negation normal form (SDNNF) circuits\footnote{Many other representations, such as the ordered binary decision diagram (OBDD) and deterministic finite automaton (DFA), can be converted efficiently to (deterministic) SDNNFs \citep{amarilli2024circus}.} \citep{pipatsrisawat2008new}. To see this, we use a simple trick: (1) convert the logical circuit into a probabilistic circuit by replacing $\vee$ with $\oplus$ and $\wedge$ with $\otimes$, and assigning positive weights to $\oplus$ edges; (2) restructure the PC; (3) convert the PC back to a logical circuit by replacing $\oplus$ with $\vee$ and $\otimes$ with $\wedge$, and removing the weights. It is immediate that the logical circuits and the corresponding PCs have the same support throughout the process.

It is also not hard to see that this procedure for logical circuits retains determinism, so, e.g, an ordered binary decision diagram (OBDD) can be efficiently restructured into a deterministic SDNNF with the reverse order while maintaining the ability to perform model counting \citep{darwiche2002knowledge}. 

\begin{figure}[t]
    \centering
    \scalebox{1.0}{\begin{tikzpicture}[semithick]
    \node[] (cs) at (3.5, 0) {\scriptsize \textbf{Contiguous \& Structured}};
    \node[] (ls) at (0, 0) {\scriptsize \textbf{Linear}};
    \node[] (bt) at (2.25, -1.5) {\scriptsize \textbf{Log-Depth \& Contiguous}};

    \node[] (abt) at (-1., -1.5) {\scriptsize \textbf{Any Log-Depth}};

    \node[] (ct) at (5.15, -1.5) {\scriptsize  \textbf{Contiguous}};

    \draw[->, line width=0.7mm, blue] (ls) -- (ct) ;
    \draw[->, line width=0.7mm, blue] (ls) -- (bt) ;
    \draw[->, line width=0.3mm] (ls) -- (abt) ;
    \draw[->, line width=0.7mm, dashed, blue] (cs) -- (bt);
    \draw[->, line width=0.3mm] (cs) -- (abt);

    \draw[line width=0.4mm] (-0.9,0.5)--(-0.4,0.5);
    \draw[line width=0.4mm,densely dashed] (2,0.5)--(2.5,0.5);

    \node[] (poly) at (0.4, 0.5) {\scriptsize Polynomial};
    
    \node[] (poly) at (3.7, 0.5) {\scriptsize Quasi-Polynomial};

    \end{tikzpicture}}
    \caption{Summary of restructuring results; the top/bottom rows indicate the source/target structures.
    {\color{blue}\textbf{Blue and bold}} arrows indicate novel complexity results. For the two other arrows that were known to be polynomial-time, our approach yields more efficient algorithms amenable to practical implementation.
    }
    \label{fig:theory_summary}
\end{figure}

\section{PC MULTIPLICATION}
\label{sec:multiplication}
One important application of restructuring PCs is circuit multiplication: given two PCs $\mscr{A}$ and $\mscr{B}$, the goal is to construct a tractable PC $\mscr{C}$ such that $p_{\mscr{C}}(\mbf{x}) \propto p_{\mscr{A}}(\mbf{x}) \cdot p_{\mscr{B}}(\mbf{x})$. PC multiplication was previously only addressed for structured PCs respecting the \emph{same} vtree~\citep{shen2016tractable, VergariNeurIPS21}. Circuit restructuring immediately gives us a means of multiplying two structured circuits respecting \emph{different} vtrees, as we can simply restructure one of them to be compatible with the other. Though the restructured PC will in general have exponential size, in this section, we consider practical cases where circuit multiplications with respect to \emph{different} vtrees is tractable. Further, we will show how ``on-the-fly'' restructuring can enable circuit multiplication even when one of the circuits is not structured-decomposable.

We start by introducing a new structural property of tractable PCs called \emph{contiguity}.\footnote{ \chg{\citet{amarilli2017enumeration} defined a similar concept called \emph{compatible order} in the context of logical circuits.
}}
\begin{definition}[Contiguity]
Given the canonical ordering of  variables $X_1, X_2, \dots, X_n$, a PC node is \emph{contiguous} if its scope is of the form $X_{a}, X_{a+1}, \dots, X_{b}$ for some $1\!\leq\!a\!\leq\!b\!\leq\!n$. A smooth and decomposable PC is contiguous if all of its nodes are contiguous.
\end{definition}
Contiguous PCs subsume many widely-used subclasses of PCs such as Hidden Markov Models~(HMMs).
Note that a contiguous circuit is not necessarily structured-decomposable and $0.5 \otimes p(X_1) \otimes p(X_2, X_3) \oplus 0.5 \otimes p(X_1, X_2) \otimes p(X_3)$ is such an example. 
The fact that contiguous PCs need not be structured-decomposable also suggests new designs of circuits that can be tractably multiplied. One important example is probabilistic context-free grammars~(PCFGs).
\begin{theorem}
\label{theorem:cfg}
Let $G$ be a PCFG consisting of $m$ production rules. For sequences of length $n$, $G$ can be represented as a contiguous yet non-structured PC $\mscr{A}$ of size $O(mn^3)$; i.e. $p_{\mscr{A}}(\mbf{x})$ computes the probability of $\mbf{x}$ being derived from $G$, for all $\mbf{x}$ of length $n$.
\end{theorem}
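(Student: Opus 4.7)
The plan is to compile the standard CKY inside-probability dynamic program directly into a PC. First I would normalize $G$ to Chomsky Normal Form, which incurs only a constant-factor blowup in $m$ and preserves the distribution $G$ assigns to length-$n$ strings, so that every production is either $A \to BC$ or $A \to a$.

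For each nonterminal $A$ and each span $[i,j]$ with $1 \le i \le j \le n$, I would introduce a sum node $S_{A,i,j}$ intended to evaluate, on input $\mbf{x}$, to the inside probability that $A$ derives $x_i \ldots x_j$. The base case $S_{A,i,i}$ is a sum over leaf inputs on $X_i$ weighted by the terminal-rule probabilities $p(A \to a)$; the inductive case encodes the CKY recurrence
\[
S_{A,i,j} \;=\; \sum_{A \to BC}\; \sum_{k=i}^{j-1} p(A \to BC)\,\bigl(S_{B,i,k} \otimes S_{C,k+1,j}\bigr),
\]
realized as a sum of product nodes, one per (rule, split-point) pair, with $p(A\to BC)$ absorbed into the incoming sum-edge weight. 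The circuit root is $S_{S,1,n}$, where $S$ is the start symbol; correctness of CKY then gives $p_{\mscr{A}}(\mbf{x}) = p_G(\mbf{x})$ for every length-$n$ string $\mbf{x}$.

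Each $S_{A,i,j}$ has scope $\{X_i,\ldots,X_j\}$, which is contiguous by definition, and every product node decomposes a contiguous scope into two contiguous sub-scopes, so $\mscr{A}$ is smooth, decomposable, and contiguous. It is \emph{not} structured-decomposable, because the scope $\{X_i,\ldots,X_j\}$ is split at every internal position $k$ in the recurrence, which no single vtree can accommodate. For the size bound, there are $O(mn^2)$ sum nodes, and the total number of product edges is $\sum_{i<j}\sum_{A\to BC}(j-i) = O(mn^3)$, with base-case edges contributing only $O(mn)$ more; hence $|\mscr{A}| = O(mn^3)$.

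The proof is mostly bookkeeping once the CKY correspondence is in place. The only mild obstacle is conforming to the paper's convention that sums and products alternate and each product has arity two, which is handled by inserting trivial intermediate sum nodes or absorbing rule probabilities into sum-edge weights; neither modification affects the asymptotic size.
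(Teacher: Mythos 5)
Your construction is correct and is essentially the same as the paper's, which only sketches the argument: a CYK-style compilation with one sum node per (nonterminal, contiguous span) pair computing the inside probability, yielding the $O(mn^3)$ bound from summing over spans, rules, and split points. Your write-up simply fills in the details (CNF normalization, the explicit recurrence, and the contiguity/non-structuredness check) that the paper leaves implicit.
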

The algorithm for constructing the PC representation for a PCFG is in spirit similar to the \emph{CYK algorithm}~\citep{kasami1966efficient,younger1967recognition,cocke1969programming}: given a PCFG, for each contiguous subsegment $[a, b]$ of $1, 2, \dots, n$, and each non-terminal $N$ in the PCFG, we can construct a sum node $u$ (recursively) such that the sub-circuit rooted at $u$ represents the distribution over the set of strings on the segment $[a, b]$ that can be derived from $N$.

In the following, we consider the problem of multiplying two contiguous PCs $\mscr{A}$ and $\mscr{B}$. Intuitively, random variables forming contiguous scopes are more likely to be covered by vtree labellings of small cardinalities, which would give efficient restructuring by Section~\ref{sec:restructuring}. We start by considering the simpler case where both circuits are structured. We then generalize our results by allowing one of them to be non-structured. Figure \ref{fig:theory_summary} summarizes our main results (Theorems \ref{theorem:linear_structured}, \ref{theorem:depth_structured}, \ref{theorem:linear_contiguous}). 

\subsection{Multiplying Contiguous Structured PCs}
\begin{figure}
    \centering
    \includegraphics[width=0.45\linewidth]{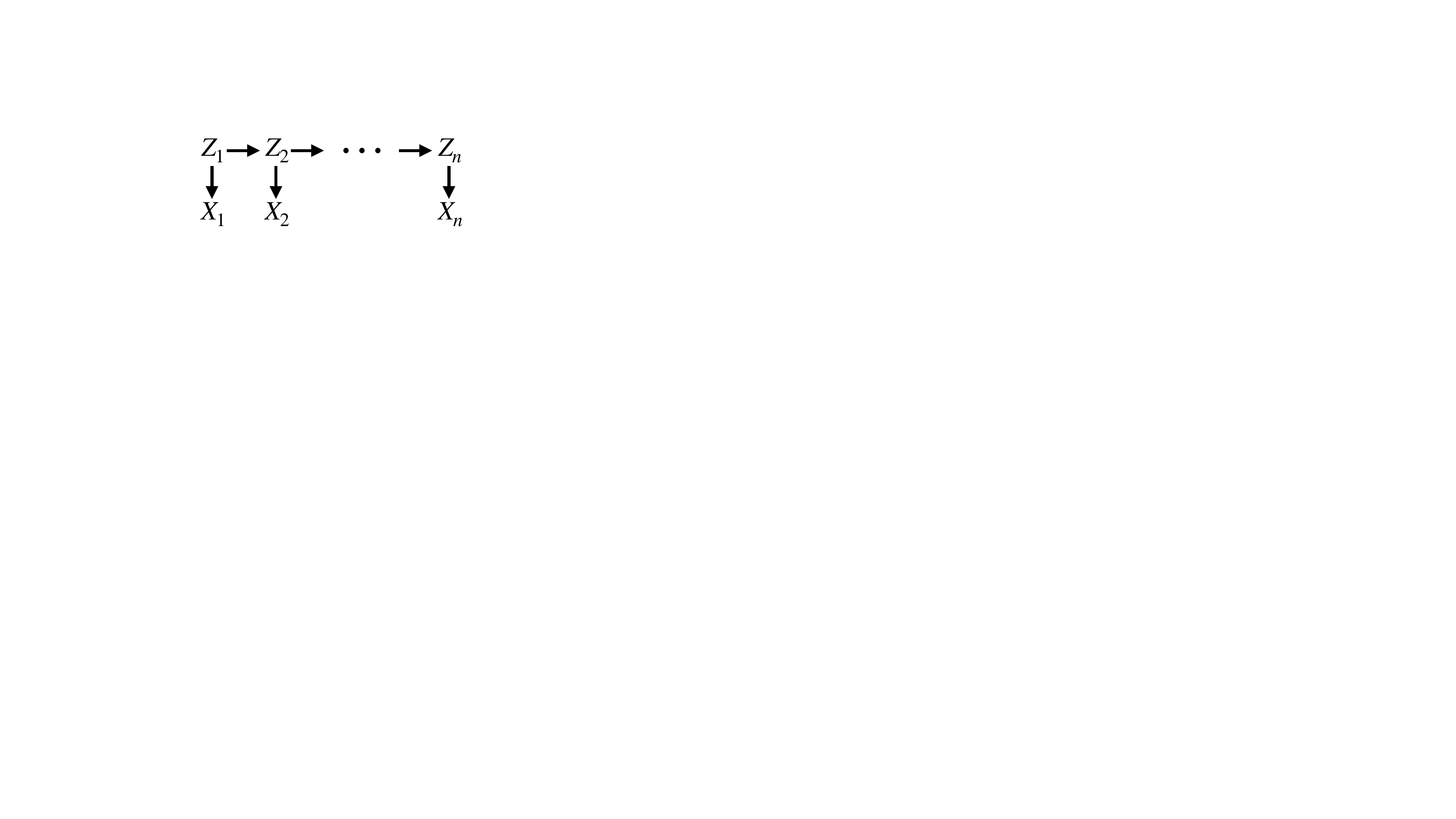}
    \caption{$G_{\mscr{A}}$ for $\mscr{A}$ with a linear vtree}
    \label{fig:hmm}
\end{figure}

\underline{Case 1.} For the multiplication of contiguous PCs $\mscr{A}$ and $\mscr{B}$, we start by considering the case when $\mscr{A}$ is a contiguous structured PC respecting the \emph{linear vtree} $V$ and $\mscr{B}$ is a contiguous structured PC respecting an arbitrary vtree $W$. It follows from Section~\ref{subsec:pc_to_graphical} that the Bayesian network representation for $\mscr{A}$ is a hidden Markov model~\citep{rabiner1989tutorial}, as shown in Figure~\ref{fig:hmm}. By the definition of contiguity, each node $w\!\in\!W$ has a scope of the form $\mbf{X}_{a:b} := \{X_a, \dots, X_b\}$ and we can label it with $\mbf{C}_{a:b} := \{Z_a, Z_{b+1}\}$; in particular, we drop $Z_a$ if $a = 1$ and drop $Z_{b+1}$ if $b = n$.
\begin{restatable}{claim}{clmContiguousLabel}
$\mbf{C}_{a:b}$ is a valid vtree labelling of $W$ respecting $G_{\mscr{A}}$ with cardinality $M^{\prime} = 3$.
\end{restatable}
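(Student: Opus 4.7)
The plan is to proceed by structural induction on the target vtree $W$, verifying that the proposed labelling $\mbf{C}_{a:b} = \{Z_a, Z_{b+1}\}$ satisfies the three properties of Definition~\ref{def:correct_labels}, and then to read off the cardinality bound. Throughout, I exploit the fact that $G_{\mscr{A}}$ is an HMM: the latent variables form a chain $Z_1 - Z_2 - \cdots - Z_n$, and each observed variable $X_i$ is attached to $Z_i$, so every path in the underlying undirected tree between two indices in $[a,b]$ and outside $[a,b]$ must cross either the edge incident to $Z_a$ on the ``left'' or the edge incident to $Z_{b+1}$ on the ``right.''

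First I would verify Property~1 (cover). Fix a vtree node $w$ with scope $\mbf{X}_{a:b}$. Any path in $G_{\mscr{A}}$ from some $X_j$ with $j \in [a,b]$ to some $X_k$ with $k \notin [a,b]$ must traverse the chain segment between $Z_j$ and $Z_k$. If $k < a$, this segment contains $Z_a$; if $k > b$, it contains $Z_{b+1}$. Hence $\mbf{C}_{a:b}$ intersects every such path (with the standard convention that $Z_a$ is dropped for $a=1$ and $Z_{b+1}$ for $b=n$, which is fine because no such boundary path exists).

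Next I would handle Properties~2 and~3 together. Let $w$ have children $l, r$ with scopes $\mbf{X}_{a:m}$ and $\mbf{X}_{m+1:b}$ respectively, so by the inductive labelling $\mbf{C}_l = \{Z_a, Z_{m+1}\}$, $\mbf{C}_r = \{Z_{m+1}, Z_{b+1}\}$, and $\mbf{C}_w = \{Z_a, Z_{b+1}\}$. I would then show that any path starting in $\mbf{X}_l = \{X_a,\ldots,X_m\}$ and ending at a node of $\mbf{C}_r \cup \mbf{C}_w = \{Z_a, Z_{m+1}, Z_{b+1}\}$ necessarily passes through $Z_a$ or $Z_{m+1}$: if the endpoint is $Z_a$ itself the endpoint lies in $\mbf{C}_l$; if the endpoint is $Z_{m+1}$ or $Z_{b+1}$, then since the starting index lies in $[a,m]$, the path to a latent with index $\ge m+1$ must cross $Z_{m+1}$ (and $Z_{m+1} \in \mbf{C}_l$). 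Thus $\mbf{C}_l$ blocks every such path, giving Property~2. Property~3 is symmetric with the roles of $l$ and $r$ swapped, using that $\mbf{C}_r$ contains $Z_{m+1}$ and $Z_{b+1}$.

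Finally, the cardinality of the labelling is bounded by $|\mbf{C}_l \cup \mbf{C}_r \cup \mbf{C}_w| \le |\{Z_a, Z_{m+1}, Z_{b+1}\}| = 3$, which holds for every inner vtree node and is therefore the maximum, so $M' = 3$. The only mildly delicate points are the boundary conventions ($a = 1$ or $b = n$) and checking that the path-blocking argument correctly treats paths whose endpoint already belongs to the separator; both follow from the literal reading of Definition~\ref{def:correct_labels} that $P \cap \mbf{S} \ne \varnothing$ suffices, and from the chain structure of the HMM which leaves no alternative routes around $Z_a$ or $Z_{m+1}$.
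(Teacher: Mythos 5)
Your argument is correct and is exactly the reasoning the paper intends: the paper states this claim without an explicit proof, and your direct verification of the three conditions of Definition~\ref{def:correct_labels} using the chain structure of $G_{\mscr{A}}$ (every path between indices on opposite sides of $[a,b]$, or of the split point $m$, must pass through $Z_a$, $Z_{m+1}$, or $Z_{b+1}$) is the natural way to fill it in. You also correctly handle the two points the paper glosses over -- the boundary convention dropping $Z_a$ for $a=1$ and $Z_{b+1}$ for $b=n$, and the fact that a path whose endpoint lies in the separator counts as blocked -- so there is nothing to add.
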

Then it follows from Theorem~\ref{theorem:pcsize} that the size of $\mscr{A}^{\prime}$, i.e., the PC obtained by restructuring $\mscr{A}$ respecting $W$, is bounded by $O(nh^3)$, with $O(|\mscr{A}|^2)$ being a looser bound. Eventually we can compute the product of $\mscr{A}^\prime$ of $\mscr{B}$ tractably by the existing algorithm for multiplying two circuits respecting the same vtree~\citep{shen2016tractable, VergariNeurIPS21}.
\begin{restatable}{theorem}{thmContiguousLinear}
\label{theorem:linear_structured}
    Let $\mscr{A}$ and $\mscr{B}$ be contiguous structured PCs. If $\mscr{A}$ has a linear vtree, then $\mscr{A}$ and $\mscr{B}$ can be multiplied in polynomial time and the size of the product PC is bounded by $O(|\mscr{A}|^2|\mscr{B}|)$.
\end{restatable}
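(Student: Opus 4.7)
The plan is to reduce multiplication of $\mscr{A}$ and $\mscr{B}$ (different vtrees) to the same-vtree case by first \emph{restructuring} $\mscr{A}$ to conform to $\mscr{B}$'s vtree $W$ via Algorithm~\ref{alg:construction}, and then applying the standard multiplication algorithm of \citet{VergariNeurIPS21} to the restructured $\mscr{A}^\prime$ and $\mscr{B}$, which runs in time $O(|\mscr{A}^\prime|\cdot|\mscr{B}|)$. The entire argument then reduces to bounding $|\mscr{A}^\prime|$, which by Theorem~\ref{theorem:pcsize} requires exhibiting a valid vtree labelling of $W$ with respect to $G_{\mscr{A}}$ of constant cardinality.

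The candidate labelling is the one proposed in Claim~\ref{clmContiguousLabel}: for each $w\in W$ with (contiguous) scope $\mbf{X}_{a:b}$, set $\mbf{C}_{a:b}:=\{Z_a, Z_{b+1}\}$ (dropping $Z_a$ if $a=1$ and $Z_{b+1}$ if $b=n$). The key structural input, illustrated in Figure~\ref{fig:hmm}, is that when $V$ is linear, $G_{\mscr{A}}$ is a Markov chain on the $Z_i$ with each $X_i$ attached to a single $Z$ node, so deleting $Z_a$ and $Z_{b+1}$ isolates the sub-chain carrying $\mbf{X}_{a:b}$ from the rest of $G_{\mscr{A}}$. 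I would then verify each property of Definition~\ref{def:correct_labels} in turn: Property~1 is immediate from this disconnection; for Properties~2 and~3, contiguity of $W$ forces any child split of $\mbf{X}_{a:b}$ to take the form $\mbf{X}_l=\mbf{X}_{a:m}$, $\mbf{X}_r=\mbf{X}_{m+1:b}$, so $\mbf{C}_l=\{Z_a, Z_{m+1}\}$ blocks every path in the chain from $\mbf{X}_l$ to $\mbf{C}_r\cup\mbf{C}_w\subseteq\{Z_a, Z_{m+1}, Z_{b+1}\}$, and symmetrically for $\mbf{C}_r$. Taking the union yields $\mbf{C}_l\cup\mbf{C}_r\cup\mbf{C}_w=\{Z_a, Z_{m+1}, Z_{b+1}\}$, so the cardinality is $M^\prime=3$.

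This verification, though essentially combinatorial, is the main (and really only) obstacle, since it is what underwrites the cited claim. Once it is in place, Theorem~\ref{theorem:pcsize} gives $|\mscr{A}^\prime|=O(nh^3)$, where $h$ is the hidden state size of $\mscr{A}$. Because a linear-vtree contiguous PC satisfies $|\mscr{A}|=\Theta(nh^2)$ in the worst case, this also yields the looser but cleaner bound $|\mscr{A}^\prime|=O(|\mscr{A}|^2)$. Multiplying $\mscr{A}^\prime$ with $\mscr{B}$ then produces a product PC of size $O(|\mscr{A}|^2|\mscr{B}|)$ in polynomial time, as claimed.
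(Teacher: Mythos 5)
Your proposal is correct and follows essentially the same route as the paper: label each $w\in W$ with scope $\mbf{X}_{a:b}$ by $\{Z_a,Z_{b+1}\}$, observe that the HMM-shaped $G_{\mscr{A}}$ makes this a valid labelling of cardinality $M'=3$, invoke Theorem~\ref{theorem:pcsize} to get $|\mscr{A}'|=O(nh^3)=O(|\mscr{A}|^2)$, and finish with the standard same-vtree multiplication. The only difference is that you explicitly verify the validity of the labelling (which the paper states as a claim without a written proof), and your verification is sound.
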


\underline{Case 2.} Then we consider the more general case where $\mscr{A}$ is a contiguous structured PC of depth $d$ respecting vtree $V$ and $\mscr{B}$ is a contiguous structured PC with an arbitrary vtree $W$. Similarly to the previous case, our goal is to come up with a small labelling of $W$ with respect to $G_{\mscr{A}}$. Since $\mscr{A}$ is contiguous, its vtree $V$ can be viewed as a \emph{segment tree}~\citep{cormen2022introduction}. Algorithm~\ref{alg:segment_cover}, which is adapted from the segment tree querying algorithm, computes a cover $\mbf{C}_{a:b} \subseteq G_{\mscr{A}}$ for each contiguous segment $\mbf{X}_{a:b}$. For each $w\!\in\!W$, $\mbf{X}_w = X_{a:b}$ for some $1 \leq a \leq b \leq n$ and we set $\mbf{C}_w = \mbf{C}_{a:b} =  \textsc{SegmentCover}(V, X_{a:b})$. 
\begin{algorithm}
\caption{Compute Cover for Segment $\mbf{X}_{a:b}$}\label{alg:segment_cover}
\begin{algorithmic}
\Procedure{SegmentCover}{$v$, $\mbf{X}_{a:b}$}
    \If{$\mbf{X}_{a:b} = \varnothing$}
        \State \Return $\varnothing$
    \EndIf
    \If{$\mbf{X}_{a:b} = \mbf{X}_v$}
        \State \Return $\{Z_v\}$
    \EndIf
    \State $l, r \gets \textsc{Children}(v)$
    \State $\mbf{L} \gets \textsc{SegmentCover}(l, \mbf{X}_l \cap \mbf{X}_{a:b})$
    \State $\mbf{R} \gets \textsc{SegmentCover}(r, \mbf{X}_r \cap \mbf{X}_{a:b})$
    \State \textbf{return} $\mbf{L} \cup \mbf{R}$
\EndProcedure
\end{algorithmic}
\end{algorithm}

\begin{restatable}{proposition}{propContiguousLabelingDepth}
$\mbf{C}_w$ is a valid vtree labelling with respect to $G_{\mscr{A}}$, and $|\mbf{C}_w|\!\leq\!4d$ for all $w$.
\end{restatable}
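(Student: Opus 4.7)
The plan is to analyze Algorithm~\ref{alg:segment_cover} as a classical segment-tree query and to transfer its guarantees to the d-separation setting of Definition~\ref{def:correct_labels}. I would first establish two structural properties of $\mbf{C}_w = \textsc{SegmentCover}(V, \mbf{X}_w)$, both provable by a short induction on the recursion depth: (i) the scopes $\{\mbf{X}_v : Z_v \in \mbf{C}_w\}$ partition $\mbf{X}_w$, and (ii) each returned $v$ is \emph{maximal}, in the sense that its parent $u$ in $V$ satisfies $\mbf{X}_u \not\subseteq \mbf{X}_w$. These two facts, together with contiguity (so every vtree node has an interval scope and the family of scopes is laminar---nested or disjoint), drive everything that follows.

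For the size bound $|\mbf{C}_w| \leq 4d$ I would run the textbook segment-tree argument. Call a vtree node $v$ \emph{partial} if the recursion continues through it, i.e.\ $\mbf{X}_v \cap \mbf{X}_w$ is a proper nonempty subset of $\mbf{X}_v$. Writing $\mbf{X}_w = X_{a:b}$, any partial $v$ must have $\mbf{X}_v$ containing $X_a$ or $X_b$ while extending past the corresponding endpoint, so partial nodes at each level of $V$ lie on the root-to-$X_a$ or root-to-$X_b$ path---at most two per level. Every output node is a child of some partial node at the previous level, giving at most four returned nodes per level and $|\mbf{C}_w| \leq 4d$ in total.

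For validity I would verify the three conditions in turn, using the fact from Section~\ref{subsec:pc_to_graphical} that $G_{\mscr{A}}$ is a directed rooted tree isomorphic to $V$ with the latent $Z_v$ at each inner node. Property~1 follows directly from (i): for a path from $X_i \in \mbf{X}_w$ to $X_j \notin \mbf{X}_w$, take the unique $Z_v \in \mbf{C}_w$ with $X_i \in \mbf{X}_v$; since $\mbf{X}_v \subseteq \mbf{X}_w$ excludes $X_j$, node $v$ is an ancestor of $X_i$ but not of $X_j$ in $V$, so $Z_v$ lies on the tree-path between them.

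The main obstacle is Properties~2 and~3. Given a path from $X_i \in \mbf{X}_l$ to some $Z_{v'} \in \mbf{C}_r \cup \mbf{C}_w$, let $Z_v \in \mbf{C}_l$ be the unique label whose scope contains $X_i$. The argument splits on how $\mbf{X}_{v'}$ meets $\mbf{X}_l$. If $\mbf{X}_{v'} \cap \mbf{X}_l = \emptyset$ (which always holds when $Z_{v'} \in \mbf{C}_r$), laminarity forces $\mbf{X}_v \cap \mbf{X}_{v'} = \emptyset$, so the tree-path ascends from $X_i$ through $Z_v$ to $\mathrm{LCA}(v, v')$ before descending to $Z_{v'}$, hitting $Z_v$. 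If $Z_{v'} \in \mbf{C}_w$ with $\mbf{X}_{v'} \subseteq \mbf{X}_l$, maximality of $v'$ in $\mbf{C}_w$ gives $\mbf{X}_{\mathrm{parent}(v')} \not\subseteq \mbf{X}_w \supseteq \mbf{X}_l$, so $Z_{v'} \in \mbf{C}_l$ as well and the path is trivially blocked at its endpoint. The subtle remaining case is when $Z_{v'} \in \mbf{C}_w$ and $\mbf{X}_{v'}$ straddles the $\mbf{X}_l / \mbf{X}_r$ boundary; here laminarity applied to $\mbf{X}_v \subseteq \mbf{X}_l$ versus $\mbf{X}_{v'} \not\subseteq \mbf{X}_l$ forces $v'$ to be a strict ancestor of $v$, so $X_i \in \mbf{X}_v \subseteq \mbf{X}_{v'}$ and the path from $X_i$ to $Z_{v'}$ ascends purely through $Z_v$. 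Property~3 follows by the symmetric argument.
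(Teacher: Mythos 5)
Your proof is correct, and it follows the same overall strategy as the paper --- treat Algorithm~\ref{alg:segment_cover} as a segment-tree query, prove a partition property of the returned scopes, and bound the output by a per-level count --- but the way you verify the d-separation conditions is genuinely different. The paper's second auxiliary property is a \emph{nesting} statement across two different queries (every element of $\mbf{C}_{c:d}$ has an ancestor in $\mbf{C}_{a:b}$ when $[c,d]\subseteq[a,b]$), obtained by observing that $\mbf{C}_{c:d}$ can be computed by re-running \textsc{SegmentCover} inside the subtrees rooted at the elements of $\mbf{C}_{a:b}$; conditions 2 and 3 of Definition~\ref{def:correct_labels} are then read off from that. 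You instead characterize $\mbf{C}_w$ intrinsically as the set of \emph{maximal} vtree nodes whose scope is contained in $\mbf{X}_w$, and combine this with laminarity of interval scopes in a direct case analysis on how $\mbf{X}_{v'}$ meets $\mbf{X}_l$; your Case~B in particular (a node of $\mbf{C}_w$ contained in $\mbf{X}_l$ is automatically in $\mbf{C}_l$) is a clean consequence of maximality that the paper gets only implicitly. The two size arguments are essentially the same textbook bound, though your phrasing via ``partial nodes lie on the two root-to-endpoint paths'' is arguably more transparent than the paper's top-down induction on ``at most four visited nodes per level.'' Two small points you leave implicit: your Case~C silently falls back to the disjoint-scope subcase when $\mbf{X}_v\cap\mbf{X}_{v'}=\varnothing$ (laminarity only forces comparability when the scopes intersect), and \textsc{SegmentCover} may return leaf vtree nodes, for which no latent $Z_v$ exists --- the paper patches this by replacing such variables with their parents in $G_{\mscr{A}}$. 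Neither affects correctness.
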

In addition to the fact that $\mbf{C}_w$ is a valid labelling, $|\mbf{C}_w|\!\leq\!4d$ follows from the fact that the number of nodes visited at each level of $V$ is at most $4$ (see Appendix); hence the cardinality of $\mbf{C}_w$ is bounded by $12d$. Then following a similar analysis, we have the following results for multiplying two contiguous PCs.
\begin{restatable}{theorem}{thmContiguousDepth}
\label{theorem:depth_structured}
    Let $\mscr{A}$ and $\mscr{B}$ be contiguous structured PCs. Let $d$ be the depth of the vtree for $\mscr{A}$, then $\mscr{A}$ and $\mscr{B}$ can be multiplied in time $O(|\mscr{A}|^{12d}|\mscr{B}|)$.
\end{restatable}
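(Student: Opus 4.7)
The plan is to reduce the task to same-vtree multiplication by restructuring $\mscr{A}$ to respect $\mscr{B}$'s vtree $W$. Since $\mscr{A}$ is contiguous and structured w.r.t.\ $V$, every target node $w\!\in\!W$ has a contiguous scope $\mbf{X}_w = \mbf{X}_{a:b}$, so I would label each $w$ with $\mbf{C}_w := \textsc{SegmentCover}(V, \mbf{X}_{a:b})$ from Algorithm~\ref{alg:segment_cover} and then feed this labelling into Algorithm~\ref{alg:construction} to obtain a circuit $\mscr{A}'$ over the same distribution that respects $W$.

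The first key step is to invoke the preceding proposition, which guarantees that $\mbf{C}_w$ is a valid labelling in the sense of Definition~\ref{def:correct_labels} and that $|\mbf{C}_w|\!\leq\!4d$. From this, the sibling union satisfies $M' = \max_{w\in W} |\mbf{C}_l \cup \mbf{C}_r \cup \mbf{C}_w| \leq 12d$, so Theorem~\ref{theorem:pcsize} immediately yields $|\mscr{A}'| = O(nh^{12d})$, where $h$ is the hidden-state size of $\mscr{A}$. Since $|\mscr{A}| = \Theta(nh^2)$, one has $h^{12d} \leq |\mscr{A}|^{6d}$, and absorbing the factor of $n$ into the exponent gives the clean bound $|\mscr{A}'| = O(|\mscr{A}|^{12d})$.

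With $\mscr{A}'$ in hand, both $\mscr{A}'$ and $\mscr{B}$ respect the \emph{same} vtree $W$, so I would invoke the standard same-vtree multiplication algorithm of \citet{shen2016tractable} and \citet{VergariNeurIPS21} to produce a structured PC $\mscr{C}$ with $p_{\mscr{C}} \propto p_{\mscr{A}'} \cdot p_{\mscr{B}} = p_{\mscr{A}} \cdot p_{\mscr{B}}$ of size $O(|\mscr{A}'|\cdot|\mscr{B}|) = O(|\mscr{A}|^{12d}|\mscr{B}|)$, running in time polynomial in its output size. Chaining the three polynomial-time stages---labelling, restructuring, and same-vtree multiplication---gives the claimed complexity.

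The main obstacle is the $|\mbf{C}_w|\!\leq\!4d$ bound supplied by the proposition: establishing it requires the classical segment-tree invariant that, on each of the $d$ levels of $V$, \textsc{SegmentCover} marks at most a constant number of vtree nodes as ``fully inside'' the query segment, which in turn relies on a careful case analysis of how $\mbf{X}_{a:b}$ straddles the scope boundary between $v$'s two children. Once that combinatorial bound is in place, the remainder of the argument is a direct substitution into Theorem~\ref{theorem:pcsize} followed by the off-the-shelf same-vtree product subroutine.
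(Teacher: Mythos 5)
Your proposal is correct and follows essentially the same route as the paper: label each node of $\mscr{B}$'s vtree via \textsc{SegmentCover}, invoke the proposition giving validity and $|\mbf{C}_w|\le 4d$ (hence $M'\le 12d$), apply Theorem~\ref{theorem:pcsize} to bound the restructured circuit by $O(nh^{12d})=O(|\mscr{A}|^{12d})$, and finish with the standard same-vtree multiplication. The only substantive technical content, as you note, is the per-level constant bound on nodes visited by the segment-tree query, which the paper likewise defers to the appendix.
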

\begin{restatable}{corollary}{corQuasi}
If $\mscr{A}$ is of depth $O(\log n)$ then $\mscr{A}$ and $\mscr{B}$ can be multiplied in quasi-polynomial time.
\end{restatable}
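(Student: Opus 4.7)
The plan is to derive this corollary directly from Theorem~\ref{theorem:depth_structured} by substituting the depth assumption and verifying that the resulting bound fits the definition of quasi-polynomial (i.e., $2^{\mathrm{polylog}(N)}$ in the input size $N$). There is no genuine obstacle here: the entire work has already been done in Theorem~\ref{theorem:depth_structured}, so this is essentially a one-line asymptotic calculation. I would present it as a short remark-style proof.

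First I would invoke Theorem~\ref{theorem:depth_structured} to obtain a runtime of $O(|\mscr{A}|^{12 d} \, |\mscr{B}|)$ for multiplying $\mscr{A}$ and $\mscr{B}$, where $d$ is the depth of the vtree of $\mscr{A}$. Substituting the hypothesis $d = O(\log n)$ yields runtime $O(|\mscr{A}|^{c \log n} \, |\mscr{B}|)$ for some constant $c$. Next I would rewrite
\[
|\mscr{A}|^{c \log n} \;=\; 2^{\,c \log n \cdot \log |\mscr{A}|}.
\]
Since the scope of $\mscr{A}$ contains $n$ variables we have $n \leq |\mscr{A}|$, hence $\log n \leq \log |\mscr{A}|$, and so $|\mscr{A}|^{c \log n} \leq 2^{\,c (\log |\mscr{A}|)^{2}}$. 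Letting $N := |\mscr{A}| + |\mscr{B}|$ denote the total input size, the overall runtime is bounded by $2^{\,c (\log N)^{2}} \cdot N$, which is $2^{\mathrm{polylog}(N)}$, i.e.\ quasi-polynomial.

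The only subtlety worth flagging explicitly is the lower bound $n \leq |\mscr{A}|$, which justifies folding $\log n$ into $\log |\mscr{A}|$; without this, one would need to treat $n$ as a separate parameter in the input size, but the conclusion would remain the same because $n \leq N$ in any event. I expect the main stylistic choice to be whether to state the bound as $|\mscr{A}|^{O(\log n)} |\mscr{B}|$ directly (and merely note that this matches the quasi-polynomial definition) or to carry out the $2^{O((\log N)^2)}$ rewriting explicitly. Either presentation suffices, and neither involves any nontrivial argument beyond Theorem~\ref{theorem:depth_structured}.
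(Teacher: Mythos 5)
Your proposal is correct and matches the paper's (implicit) reasoning exactly: the corollary is an immediate consequence of Theorem~\ref{theorem:depth_structured}, obtained by substituting $d = O(\log n)$ into the $O(|\mscr{A}|^{12d}|\mscr{B}|)$ bound and noting that $|\mscr{A}|^{O(\log n)}|\mscr{B}|$ is $2^{O((\log N)^2)}$ in the input size $N$. The observation that $n \leq |\mscr{A}|$ is a reasonable extra bit of care, though not strictly needed since $n$ is already part of the input size.
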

\begin{remark}
In this work, we assumed product nodes always have two children and binary vtrees. Hence the depths of PCs are lower-bounded by $\Omega(\log n)$ under such assumptions. However, if we allow product nodes to have arbitrarily many children, we can have PCs of smaller or even constant depths~\citep{raz2009lower} and we hypothesize that Theorem~\ref{theorem:pcsize} can be adapted to such generalized cases thus giving a polynomial-time algorithm for multiplying contiguous structured circuits of constant depths.
\end{remark}
\subsection{Generalization to Non-structured PCs}
Thus far, we have assumed that both $\mscr{A}$ and $\mscr{B}$ are structured PCs. We claim that we can further generalize our results by removing the constraint that $\mscr{B}$ has to be structured, and Theorems~\ref{theorem:linear_structured} and~\ref{theorem:depth_structured} would still hold.
We illustrate the idea by showing how to multiply a contiguous structured PC $\mscr{A}$ respecting a linear vtree and an arbitrary contiguous PC $\mscr{B}$. Since $\mscr{B}$ is not structured decomposable, we cannot restructure $\mscr{A}$ to the vtree of $\mscr{B}$. However, we can use the same idea as Algorithm~\ref{alg:construction} to restructure $\mscr{A}$ ``on-the-fly'' as we multiply it with $\mscr{B}$ in a bottom-up way. Specifically, for each possible scope $\mbf{X}_{a:b}$ that appears in $\mscr{B}$, we recursively construct circuit representations for the functions $p_{q}(\mbf{X}_{a:b})\cdot p_{\mscr{A}}(\mbf{X}_{a:b} \given Z_a\!=\!i, Z_b\!=\!j)$ for $i, j$ and $q\!\in\!\mscr{B}$ with scope $\mbf{X}_{a:b}$. The recurrence relation is similar to that of Algorithm~\ref{alg:construction} and we refer readers to the Appendix for details.
\begin{restatable}{theorem}{thmContiguousNonstructured}
\label{theorem:linear_contiguous}
    Let $\mscr{A}$ and $\mscr{B}$ be contiguous PCs with $\mscr{B}$ not necessarily structured. If $\mscr{A}$ is structured respecting the linear vtree, then $\mscr{A}$ and $\mscr{B}$ can be multiplied in polynomial time and the size of the product PC is bounded by $O(|\mscr{A}|^2|\mscr{B}|)$.
\end{restatable}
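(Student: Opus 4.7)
The plan is to adapt Algorithm~\ref{alg:construction} so that the restructuring of $\mscr{A}$ is performed incrementally as we traverse $\mscr{B}$ bottom-up, thereby avoiding any commitment to a single target vtree (which $\mscr{B}$ does not have). Because $\mscr{A}$ respects a linear vtree, its Bayesian network representation $G_{\mscr{A}}$ is an HMM (Fig.~\ref{fig:hmm}), so for any contiguous scope $\mbf{X}_{a:b}$ the only latent variables needed to seal off the segment are the endpoint latents $Z_a$ and $Z_{b+1}$; this is exactly what made Case~1 in Section~\ref{sec:multiplication} go through with $M' = 3$.

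For each node $q\in\mscr{B}$ with (necessarily contiguous) scope $\mbf{X}_{a:b}$ and each pair of hidden states $(i,j)\in[h]^2$, I would construct a sub-circuit representing
\[
    \tilde{f}_q^{i,j}(\mbf{X}_{a:b}) := p_q(\mbf{X}_{a:b})\cdot p_{\mscr{A}}(\mbf{X}_{a:b}, Z_{b+1}\!=\!j \given Z_a\!=\!i),
\]
dropping the $Z_a$ (resp.\ $Z_{b+1}$) index when $a = 1$ (resp.\ $b = n$). The recurrences I would use are: (i) for a leaf $q$ with scope $X_k$, the HMM factorization gives $\tilde{f}_q^{i,j}(X_k) = f_q(X_k)\cdot p(X_k\given Z_k\!=\!i)\cdot p(Z_{k+1}\!=\!j\given Z_k\!=\!i)$, whose factors are directly available from $G_{\mscr{A}}$; (ii) for a sum node $q = \bigoplus_k w_k q_k$ with the same scope as $q$, $\tilde{f}_q^{i,j} = \sum_k w_k\,\tilde{f}_{q_k}^{i,j}$; and (iii) for a product node $q = q_1 \otimes q_2$ splitting $\mbf{X}_{a:b}$ at position $c$, the HMM Markov property yields the key recurrence
\[
    \tilde{f}_q^{i,j}(\mbf{X}_{a:b}) = \sum_{k=1}^{h}\tilde{f}_{q_1}^{i,k}(\mbf{X}_{a:c})\cdot \tilde{f}_{q_2}^{k,j}(\mbf{X}_{c+1:b}),
\]
which is the on-the-fly analogue of the sum-product step in Algorithm~\ref{alg:construction}. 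After processing the root, I would form $\sum_{i} p(Z_1\!=\!i)\sum_{j}\tilde{f}_{\mathrm{root}}^{i,j}$ (with the boundary conventions above) to obtain a circuit proportional to $p_{\mscr{A}}\cdot p_{\mscr{B}}$, which can be normalized globally.

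Correctness follows by structural induction on $\mscr{B}$: the leaf and sum cases are immediate from the definition of $\tilde{f}_q^{i,j}$, and the product case uses the HMM Markov property exactly as written above. For the size bound, each $q\in\mscr{B}$ contributes $O(h^2)$ new sum nodes; each product node in $\mscr{B}$ contributes $O(h^3)$ edges (a sum over $k$ for every $(i,j)$); and each sum node in $\mscr{B}$ of fan-in $d$ contributes $O(h^2 d)$ edges. The total is $O(h^3\,|\mscr{B}|)$, which, since $|\mscr{A}| = \Theta(nh^2)$, is trivially $O(|\mscr{A}|^2|\mscr{B}|)$ as claimed.

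The main obstacle will be pinning down the product recurrence so that it works uniformly no matter where each product node of $\mscr{B}$ happens to split its scope. The resolving insight is that conditioning on $(Z_a, Z_{b+1})$ makes the HMM segment self-contained: for any interior split $c$, the single ``glue'' variable is $Z_{c+1}$, and $p_{\mscr{A}}(\mbf{X}_{a:b}, Z_{b+1}\given Z_a)$ factors through it by the Markov property regardless of $c$. A secondary piece of bookkeeping is handling the boundaries $a = 1$ and $b = n$, which I would address once and for all by a fixed convention of dropping the missing endpoint index and applying the prior $p(Z_1)$ at the root, rather than doing it piecemeal at each recursion.
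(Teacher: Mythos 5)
Your proposal is correct and follows essentially the same route as the paper's appendix proof: an on-the-fly restructuring of $\mscr{A}$ driven by the HMM Markov property, recursively building, for each node of $\mscr{B}$ with scope $\mbf{X}_{a:b}$, circuits for $p_q(\mbf{X}_{a:b})$ times a segment distribution of $\mscr{A}$ indexed by the endpoint latents $(Z_a, Z_{b+1})$, with the same $O(h^3|\mscr{B}|)$ size accounting. The only cosmetic difference is that you carry the joint $p_{\mscr{A}}(\mbf{X}_{a:b}, Z_{b+1}\!=\!j \mid Z_a\!=\!i)$ rather than the conditional $p_{\mscr{A}}(\mbf{X}_{a:b} \mid Z_a\!=\!i, Z_{b+1}\!=\!j)$, which folds the paper's explicit mixture weight $p_{\mscr{A}}(Z_{m_c+1}\!=\!k \mid Z_a, Z_{b+1})$ into the recursion.
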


As an explicit application of circuit multiplication, let us consider constrained text generation, in which linear PCs (HMMs) are multiplied with deterministic finite automata (DFAs) representing the logical constraint~\citep{zhang2024adaptable}. Our results imply that one can also multiply HMMs with contiguous logical circuits such as a sentential decision diagrams~(SDDs)~\citep{darwiche2011sdd}, which have been shown to be exponentially more expressive efficient~\citep{bova2016sdds}.
It also follows from Theorem~\ref{theorem:cfg} that we can tractably multiply HMMs with (P)CFGs, subsuming a recent result that HMMs can be tractably marginalized over unambiguous CFGs~\citep{marzouk2022marginal}. Our results on multiplying contiguous PCs open up new possibilities for generalizing~\citep{zhang2024adaptable} to much broader families of circuits and constraints.

\section{PC DEPTH REDUCTION}
\label{sec:depth_reduction}

\begin{algorithm}
\caption{Depth Reduction Vtree}\label{alg:depth_reduction_vtree}
\begin{algorithmic}[1]
\Procedure{BalancedVtree}{$\vtree, \mbf{S}_{l} = \emptyset, \mbf{S}_{r} = \emptyset$}
    \If{$|\vtree| = 1$}
        \State \textbf{return} \textsc{leaf}($\vtree$; $\mbf{S}_l \cup \mbf{S}_r)$
    \EndIf
    \State $v \gets \textsc{root}(\vtree)$
    \State $l, r \gets \textsc{Children}(v)$
    \While{$|V_r| > \frac{2}{3}|V|$} \label{algline:dr_while_start}  
        \State $v \gets r$
        \State $l, r \gets \textsc{Children}(v)$ \Comment{assume $|\vtree_l|\!\leq\!|\vtree_r|$}
        
    \EndWhile \label{algline:dr_while_end}
    \State $\vtree_{l}' \gets \textsc{BalancedVtree}(\vtree_{[v \mapsto l]}, \mbf{S}_l, \{Z_{v}\})$
    \State $\vtree_{r}' \gets \textsc{BalancedVtree}(\vtree_{r}, \{Z_{v}\}, \mbf{S}_r)$
    \State \textbf{return} $\textsc{join}(\vtree_{l}', \vtree_{r}'; \mbf{S}_l \cup \mbf{S}_r)$ 
\EndProcedure
\end{algorithmic}
\end{algorithm}

Depth reduction of a probabilistic circuit refers to the construction of an equivalent circuit with reduced depth, e.g. to a depth logarithmic in the number of variables. A depth reduction algorithm for general circuits is known~\citep{valiant1983fast, raz2008balancing, yin2024expressive} but does not take advantage of structuredness. We show how to reduce a structured-decomposable circuit to an equivalent log-depth circuit by restructuring. Unlike in multiplication, we are not concerned with restructuring to a particular vtree, but rather \emph{any} log-depth vtree. The key step is thus to identify a log-depth vtree such that restructuring to that vtree using Algorithm~\ref{alg:construction} (and some valid choice of labels) results in at most a polynomial increase in size.

Algorithm \ref{alg:depth_reduction_vtree} constructs a log-depth vtree labelling of constant cardinality. Intuitively, each step of the algorithm breaks a vtree down into two connected components, which are then depth-reduced recursively. One selects a single vtree node by traversing the vtree top-down, until the split would be balanced in the sense that the two connected components have size between $\frac{1}{3}$ and $\frac{2}{3}$ of the input vtree (Lines \ref{algline:dr_while_start}-\ref{algline:dr_while_end}). The algorithm simulataneously constructs a valid label for the vtree node. The \textsc{join} routine then returns a labelled vtree that consists of a single root node with the aforementioned label, connected to the depth-reduced vtrees for the components. Note that the algorithm produces exactly one vtree node for each vtree node in the original vtree; we can thus write $v(w)$ for the node in $V$ corresponding to $w$. Then we have the following result:
\begin{theorem}[Depth Reduction] \label{thm:depth_reduction}
    Given any vtree $\vtree$, Algorithm \ref{alg:depth_reduction_vtree} returns a vtree $W$ of depth $O(\log|\vtree|)$ with a valid labelling of cardinality \chg{$M'$ at most} $3$.    
\end{theorem}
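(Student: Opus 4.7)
The plan is to prove the three claims of Theorem \ref{thm:depth_reduction} — the depth bound, the cardinality bound, and validity of the labelling — by induction on the recursive calls of Algorithm \ref{alg:depth_reduction_vtree}.

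For the depth bound, I would first show that when the while loop terminates, the selected node $v$ satisfies $|V|/3 \leq |V_r| \leq (2/3)|V|$. The upper bound is precisely the loop's exit condition. For the lower bound I would distinguish two cases: if the loop has iterated at least once, then just before the final descent $v$ was set to a node whose subtree had size $>(2/3)|V|$, so its newly-assigned larger child $r$ satisfies $|V_r| \geq (|V_v|-1)/2 > |V|/3 - 1/2$; and if the loop never iterated, $v$ is the root, and the assumption $|V_l| \leq |V_r|$ gives $|V_r| \geq (|V|-1)/2 \geq |V|/3$ for $|V| \geq 3$. Hence $|V_{[v\mapsto l]}| = |V|-1-|V_r| \leq (2/3)|V|$ as well, so the recursion depth satisfies $d(n) \leq 1 + d(\lfloor (2/3)n \rfloor) = O(\log n)$, which is precisely the depth of the returned vtree $W$.

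For the cardinality bound, I would establish by induction on the recursion that $|\mbf{S}_l|, |\mbf{S}_r| \leq 1$ at every call: the initial call passes both parameters empty, and each recursive call replaces one of them with the singleton $\{Z_v\}$ while inheriting the other, so the invariant is preserved. For an inner node $w \in W$ produced by a call with parameters $(\mbf{S}_l, \mbf{S}_r)$ and selected vtree node $v$, unwinding one level of recursion gives $\mbf{C}_w = \mbf{S}_l \cup \mbf{S}_r$, $\mbf{C}_l = \mbf{S}_l \cup \{Z_v\}$, and $\mbf{C}_r = \mbf{S}_r \cup \{Z_v\}$, so $\mbf{C}_l \cup \mbf{C}_r \cup \mbf{C}_w = \mbf{S}_l \cup \mbf{S}_r \cup \{Z_v\}$ has cardinality at most three.

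Finally, for validity of the labelling (Definition \ref{def:correct_labels}), I would maintain as inductive invariant that $\mbf{S}_l \cup \mbf{S}_r$ covers $\mbf{X}_V$ in $G_{\mscr{A}}$; since $G_{\mscr{A}}$ is tree-shaped with the same skeleton as the original vtree, removing $Z_v$ at each recursive cut separates $\mbf{X}_{V_l}$ from $\mbf{X}_{V_r}$, which preserves the invariant for both subproblems. Property 1 for inner $w$ then follows directly. Properties 2 and 3 follow because, in the tree $G_{\mscr{A}}$, any path from a variable in $\mbf{X}_l$ to a latent in $\mbf{C}_r \cup \mbf{C}_w \setminus \mbf{C}_l$ either crosses into the $V_r$-side (and must pass through $Z_v \in \mbf{C}_l$) or exits $V$ toward the already-separated region (and must pass through the inherited boundary element of $\mbf{S}_l \subseteq \mbf{C}_l$). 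The main obstacle I expect is precisely this last step: when the while loop descends through several vtree nodes and the recursion alternates between left and right calls, one must book-keep exactly which latent each path exits through, and confirm that the inherited singleton $\mbf{S}_l$ (or $\mbf{S}_r$) is always the correct first blocker on every relevant path in $G_{\mscr{A}}$, rather than some other intermediate latent that the algorithm has dropped.
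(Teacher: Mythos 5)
Your arguments for the depth bound and for $M'\leq 3$ follow the same route as the paper's proof (which states each in a single sentence) and are fine; the substantive issue is the validity step, and the obstacle you flag at the end is a genuine gap rather than routine bookkeeping. The invariant you propose --- that $\mbf{S}_l\cup\mbf{S}_r$ covers the variables of the current sub-vtree --- is not preserved by the recursion as written, because the left call inherits only $\mbf{S}_l$ and the right call only $\mbf{S}_r$. Dropping $\mbf{S}_r$ in the left call is sound only if every path from the left sub-vtree's variables that previously required the latent in $\mbf{S}_r$ as a blocker now meets $Z_v$ first, i.e.\ only if that latent ends up ``behind'' $Z_v$, inside $\vtree_r$. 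The while loop descends into the larger child with no regard to where the previously cut latents sit, so this can fail. Concretely: let $V$ have root $a$ with children $b$ and a large $c$, and let $c$ have children $d$ and $e$. If the first cut selects $v=c$ with $r=e$, the left call receives $(\vtree_{[c\mapsto d]},\varnothing,\{Z_c\})$; if in that call the root $a$ is selected with larger child $b$, the next left call is $(\vtree_{d},\varnothing,\{Z_a\})$, so the vtree node for $\mbf{X}_d$ gets label $\{Z_a\}$. But the path from $\mbf{X}_d$ to $\mbf{X}_e$ in $G_{\mscr{A}}$ runs through $Z_d$ and $Z_c$ and never meets $Z_a$, so Property 1 of Definition~\ref{def:correct_labels} fails: the needed blocker $Z_c$ was discarded. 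Properties 2 and 3 fail at the parent node for the same reason.

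Closing the gap requires a stronger invariant than the one you state: each sub-vtree's boundary in $G_{\mscr{A}}$ must consist of at most one ``ancestor-side'' latent ($\mbf{S}_l$) and at most one ``descendant-side'' latent ($\mbf{S}_r$), and the selected cut $Z_v$ must lie on the $G_{\mscr{A}}$-path between them (equivalently, the previously cut descendant-side latent must fall inside $\vtree_r$). Enforcing this means constraining the descent or the choice of which child plays the role of $r$, and the $1/3$--$2/3$ balance analysis must then be re-verified under that constraint. You should be aware that the paper's own proof is equally silent here --- it asserts validity ``due to the separation into connected components'' and notes that the labels ``can also be obtained from Algorithm~\ref{alg:compute_label}'' --- so the step you singled out as the main obstacle is exactly the one that neither your argument nor the paper's establishes.
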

\begin{proof}
     The depth reduction to $O(\log|\vtree|)$ is achieved as the algorithm increases the depth by one in each recursive call, but reduces the vtree size by a multiplicative factor. The validity condition holds due to the separation into connected components (the labels can also be obtained from Algorithm \ref{alg:compute_label}). The value of $M'$ follows by noting that $\mbf{S}_l$ and $\mbf{S}_r$ are either empty or singleton sets, and that the algorithm produces $\mbf{C}_w\!=\!\mbf{S}_l \cup \mbf{S}_r$, $\mbf{C}_l\!=\!\mbf{S}_l \cup \{Z_{v(w)}\}$, and $\mbf{C}_r\!=\!\{Z_{v(w)}\} \cup \mbf{S}_l$ where $Z_{v(w)}$ for each inner node $w\!\in\!W$.
\end{proof}
\begin{remark}
Firstly, the depth-reduced PC retains structuredness, which is not guaranteed by the existing depth-reduction algorithms. Secondly, exploiting structuredness and tracking the hidden state size enables a more fine-grained analysis of the size of the depth-reduced circuit. Since the size of the original circuit is $O(nh^2)$, using the known cubic bound on the size of the depth-reduced circuit \citep{raz2008balancing} gives $O(n^3h^6)$. However, by Theorem \ref{thm:depth_reduction}, we see that $M^\prime\!=\!\max_{w \in W} |\mbf{C}_l, \mbf{C}_r, \mbf{C}_W|\!\leq\!3$ and so by Theorem \ref{theorem:pcsize} we immediately obtain a much tighter bound of $O(nh^3)$ for the size of the resulting circuit.
\end{remark}
\begin{restatable}{corollary}{corDepthReduction}
    Any structured PC over $n$ variables and with hidden state size $h$ can be restructured to a structured PC of depth $O(\log n)$ and size $O(nh^3)$ that represents the same distribution.
\end{restatable}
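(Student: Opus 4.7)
The plan is to obtain this corollary as an essentially immediate consequence of chaining Theorem~\ref{thm:depth_reduction} with Theorem~\ref{theorem:pcsize}. Given a structured PC $\mscr{A}$ over $n$ variables with vtree $V$ and hidden state size $h$, the high-level recipe is: (i) produce a balanced, labelled target vtree $W$, and (ii) invoke Algorithm~\ref{alg:construction} to realize the restructured PC with respect to $W$.

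For step (i), I would run Algorithm~\ref{alg:depth_reduction_vtree} on $V$. Theorem~\ref{thm:depth_reduction} then guarantees that $W$ has depth $O(\log|V|)$ and that the accompanying labelling is valid with cardinality $M^\prime\leq 3$. Since a vtree over $n$ variables has exactly $2n-1$ nodes, $\log|V|=\Theta(\log n)$, so the depth of $W$ is $O(\log n)$. For step (ii), feeding $W$ and its labelling into Algorithm~\ref{alg:construction} yields a PC $\mscr{A}^\prime$ that, by construction, is structured-decomposable with respect to $W$ and represents the same distribution as $\mscr{A}$. Plugging $M^\prime\leq 3$ into the size bound of Theorem~\ref{theorem:pcsize} immediately gives $|\mscr{A}^\prime|=O(nh^{M^\prime})=O(nh^3)$.

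The one point that is not already packaged by the cited theorems is that the \emph{depth} of $\mscr{A}^\prime$ itself (as opposed to the depth of $W$) is $O(\log n)$. To see this, I would inspect Algorithm~\ref{alg:construction}: at each inner vtree node $w$ the procedure appends a constant number of layers on top of the sub-circuits built for the children, namely one product layer for $\bigoplus_{\mbf{C}_l}\cdot\bigoplus_{\mbf{C}_r}$ and one sum layer for the marginalization $\sum_{(\mbf{C}_l\cup\mbf{C}_r)\setminus\mbf{C}_w}$. Because $|\mbf{C}_l\cup\mbf{C}_r\cup\mbf{C}_w|\leq M^\prime\leq 3$, that marginalization is over a constant-sized set and can be implemented in constant depth. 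The depth of $\mscr{A}^\prime$ is therefore bounded by a constant multiple of the depth of $W$, which is $O(\log n)$.

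The main (and very minor) obstacle is this last depth bookkeeping: the results we invoke bound the \emph{vtree} depth and the \emph{size}, but not directly the depth of the output circuit, so one has to check that Algorithm~\ref{alg:construction} only inflates depth by a multiplicative constant per vtree level. Once that is noted, the structuredness, size, and depth claims all follow by direct substitution of $M^\prime\leq 3$ and $|V|=O(n)$ into the two prior theorems.
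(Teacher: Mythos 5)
Your proposal is correct and follows essentially the same route as the paper: invoke Theorem~\ref{thm:depth_reduction} to get a log-depth vtree with labelling cardinality $M'\leq 3$, then apply Theorem~\ref{theorem:pcsize} for the $O(nh^3)$ size bound, and finally observe that the circuit depth is only a constant factor times the vtree depth. Your explicit depth bookkeeping for Algorithm~\ref{alg:construction} matches the paper's remark that alternating sum and product layers at most double the vtree depth.
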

While this result is of independent theoretical interest, the sub-quadratic complexity of $O(nh^3)$ also opens up practical applications of depth-reduction. Almost all PC inference and learning algorithms involve forward/backward passes through the computation graph, where computation is only parallelized across nodes of the same depth such that $O(\text{depth of PC})$ sequential computations are required. This is problematic when the number of variables $n$ is large, as is often the case in applications such as computational biology \citep{DangRECOMB22}. In such cases, depth reduction can be a practical strategy where the improved parallelism outweighs the increased circuit size. 

\section{RELATED WORK}
Probabilistic circuits have emerged as a unifying representation of tractable probabilistic models \citep{ProbCirc20,sidheekh2024building}, such as sum-product networks \citep{poon2011sum}, cutset networks \citep{rahman2014cutset}, probabilistic sentential decision diagrams \citep{kisa2014probabilistic} and probabilistic generating circuits~\citep{ZhangICML21, harviainen2023inference, agarwal2024probabilistic, BroadrickUAI24}. Significant effort has been devoted to learning PC structures to fit data \citep{liang2017learning,dang2020strudel,yang2023bayesian}, but the implications for the structure-dependent queries have been less studied. 
We bridge this gap by providing a general restructuring algorithm with specific cases of (quasi-)polynomial complexity.

As tractable representations of distributions, PCs have been employed extensively as a compilation target for inference in graphical models \citep{darwiche2003differential,chavira2008probabilistic,rooshenas2014learning}. Hidden tree-structured Bayesian networks have also been used as a starting point for the learning of a probabilistic circuit \citep{dang2020strudel,LiuNeurIPS21,DangNeurIPS22}. A particularly useful analysis technique for learning probabilistic circuits has been to interpret them as latent variable models \citep{peharz2016latent}.  Decomposable and smooth PCs can be interpreted as Bayesian networks by introducing a latent variable for each sum node in the PC \citep{zhao2015relationship}. Our conversion from structured PC to Bayesian network is most closely related to the \emph{decompilation} methods of \citet{butz2020sum,papantonis2023transparency}, but we do not assume the PC has been compiled from a Bayesian network.

The seminal work of \citet{valiant1983fast} showed that any poly-size arithmetic circuit can be transformed into an equivalent circuit of polylogarithmic depth. \citet{raz2008balancing} show that this procedure maintains syntactic multilinearity (decomposability). Recently, \citet{yin2024expressive} showed a quasipolynomial upper bound on converting decomposable and smooth PCs to tree-shaped PCs via a depth-reduction procedure. Our application of restructuring focuses on structured-decomposable circuits and shows a tighter bound based on a graphical model interpretation.

\section{DISCUSSION}
We introduce the problem of \emph{restructuring} probabilistic circuits, and develop a general algorithm for restructuring a structured-decomposable circuit to any target vtree structure.  Our method exploits an interpretation of structured-decomposable circuits as latent tree Bayesian networks, which enables recursive construction of a circuit respecting the target vtree using probabilistic semantics of the Bayesian network. As concrete applications of restructuring, we show how to tractably multiply two circuits which do not necessarily share the same structure but satisfy a \emph{contiguity} property, and show how to restructure a circuit to log-depth with a sub-quadratic increase in size. 

Our work takes a step towards understanding when and how the space of structured circuits -{}- hierarchical tensor factorizations \citep{loconte2024relationship} -{}- can be ``connected'' through tractable transformations. Many interesting theoretical problems remain open: for instance, the optimality of our algorithm, and circuit lower bounds for particular structures. We also envisage that our restructuring algorithm will enable new methodology both within the PC community and broadly in structured representations of high-dimensional tensors and probability distributions.

\subsection*{Acknowledgements}
This work was funded in part by the DARPA ANSR program under award FA8750-23-2-0004, the DARPA CODORD program under award HR00112590089, NSF grant \#IIS-1943641, and gifts from Adobe Research, Cisco Research, and Amazon. This work was done in part while BW, MA and GVdB were visiting the Simons Institute for the Theory of Computing.

\bibliography{sample_paper}

\appendix
\onecolumn
\aistatstitle{Supplementary Materials}
\unskip
\section{ADDITIONAL PROOFS}
\propAug*
\begin{proof}
    Suppose that $\pc$ is a structured decomposable and smooth PC respecting vtree $\vtree$. Write $\prods(v)$  and $ \text{sum}(v)$ for the set of product and sum nodes with scope $\mbf{X}_v$. The augmented PC $\pc_{\text{aug}}$ is decomposable as if leaves with scope $\{Z_{v}\}$ were contained in (the subcircuits rooted at) two different children $t_1, t_2$ of a product node, then their parents (nodes in $\prods(v)$) would be contained in $t_1, t_2$, which is a contradiction of decomposability of $\pc$. It is also smooth as for any sum node, if one sum node contains some leaf with scope $\{Z_v\}$, then it contains some node in $\text{sum}(v)$, hence by smoothness of $\pc$ all sum nodes contain some node in $\text{sum}(v)$ and thus some leaf with scope $\{Z_v\}$.

    Consider the standard marginalization algorithm for PCs \citep{darwiche2003differential,ProbCirc20}, where one replaces each leaf whose scope is contained within the variables being marginalized out with the constant $1$. This correctly marginalizes the function represented by the PC if the PC is decomposable and smooth. If we marginalize over all newly introduced latents $\mbf{Z}$, it is immediate that the resulting PC represents the same function as $\pc$.
\end{proof}

\thmPCBN*
\begin{proof}
    In Section \ref{subsec:pc_to_graphical} we described a Bayesian network $p_G = p^*$ with the required graph. It remains to show that this network represents the same distribution as $\pc$. We will do this by showing that the Bayesian network has the same distribution as the augmented PC, i.e. $p_G(\mbf{X}, \mbf{Z}) = p_{\pc_{\text{aug}}}(\mbf{X}, \bm{Z})$.

    The key observation is to consider the \emph{induced trees} of the augmented PC~\citep{zhao2016unified}:

\begin{definition}[Induced Trees] 
    Given a decomposable and smooth circuit $\pc$, let $T$ be a subgraph of $\pc$. We say that $T$ is an induced tree of $\pc$ if (1) $\textsc{root}(\pc) \in T$; (2) If $t \in T$ is a sum node, then exactly one child of $t$ (and the corresponding edge) is in $T$; and (3) If $t \in T$ is a product node, then all children of $t$ (and the corresponding edges) are in $T$.
\end{definition}

It is easy to see that an induced tree $T$ is indeed a tree, as otherwise decomposability would be violated. Let $\mathcal{T}$ be the set of all induced trees of $\pc_{\text{aug}}$. Each induced tree defines a function:
\begin{equation} \label{eqn:tree_value}
    p_{\pc_{\text{aug}}, T}(\mbf{X}, \mbf{Z}) := \prod_{(t_i, t_j) \in \textsc{sumedges}(T)} \pcweight_{t_i, t_j} \prod_{t \in \textsc{leaves}_{\mbf{X}}(T)} \pcleaffn_{t}(X_{\scope(t)}) \prod_{t \in \textsc{leaves}_{\mbf{Z}}(T)} \pcleaffn_{t}(Z_{\scope(t)})
\end{equation}
where $\textsc{sumedges}(T)$ denotes the set of outgoing edges from sum nodes in $T$, and $\textsc{leaves}_{\mbf{X}}(T), \textsc{leaves}_{\mbf{Z}}(T)$ denote the set of leaf nodes in $T$ with scope corresponding to a variable in $\mbf{X}, \mbf{Z}$ respectively. The distribution of the augmented PC is then in fact given by the sum of these functions over all induced trees:

\begin{proposition}[\citet{zhao2016unified}] \label{prop:zhao}  $p_{\pc_{\text{aug}}}(\mbf{X}, \mbf{Z}) = \sum_{T \in \mathcal{T}} p_{\pc_{\text{aug}}, T}(\bm{X}, \bm{Z})$.
\end{proposition}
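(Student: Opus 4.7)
The plan is to establish the proposition by structural induction on the augmented PC, strengthening the statement to the following claim: for every node $t \in \pc_{\text{aug}}$, the function $p_t(\mbf{x}, \mbf{z}) = \sum_{T \in \mathcal{T}_t} p_{\pc_{\text{aug}}, T}(\mbf{x}, \mbf{z})$, where $\mathcal{T}_t$ denotes the set of subtrees induced by the subcircuit rooted at $t$, and $p_{\pc_{\text{aug}}, T}$ is defined by restricting (\ref{eqn:tree_value}) to the relevant edges and leaves. The proposition then follows by specializing the claim to the root of $\pc_{\text{aug}}$.

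For the base case, a leaf node $t$ has exactly one induced subtree (itself), and the tree-value formula (\ref{eqn:tree_value}) reduces to the single leaf function $\pcleaffn_t$, which matches $p_t$ by definition.

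For the inductive step I would handle product and sum nodes separately. If $t$ is a product node with children $c_1, c_2$, every induced subtree $T$ of $t$ must include both children together with induced subtrees $T_i \in \mathcal{T}_{c_i}$, giving a bijection $\mathcal{T}_t \leftrightarrow \mathcal{T}_{c_1} \times \mathcal{T}_{c_2}$. Decomposability (preserved under augmentation by the earlier proposition) ensures that the scopes of $T_1$ and $T_2$ are disjoint, so the product in (\ref{eqn:tree_value}) factorizes as $p_{\pc_{\text{aug}}, T} = p_{\pc_{\text{aug}}, T_1} \cdot p_{\pc_{\text{aug}}, T_2}$; applying the inductive hypothesis and distributivity yields $p_t = p_{c_1} \cdot p_{c_2} = \sum_{T_1, T_2} p_{T_1} p_{T_2}$. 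If $t$ is a sum node with children $\{c_i\}$, every induced subtree of $t$ corresponds to a choice of exactly one child $c_i$ together with an induced subtree $T_i \in \mathcal{T}_{c_i}$, and the selected edge contributes the weight $\pcweight_{t, c_i}$ to the product in (\ref{eqn:tree_value}). Summing over the disjoint bijection $\mathcal{T}_t \leftrightarrow \bigsqcup_i \mathcal{T}_{c_i}$ and invoking the inductive hypothesis gives $p_t = \sum_i \pcweight_{t, c_i} p_{c_i}$, as required.

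The main obstacle is the careful combinatorial verification of the two bijections, and in particular ensuring that decomposability rules out any overlap of leaf scopes between $T_1$ and $T_2$ at a product node so that the tree values factor cleanly, while smoothness together with the construction of $\pc_{\text{aug}}$ guarantees that each induced tree assigns a value to every latent $Z_v$ exactly once. Once these structural facts are in place, the inductive step is essentially an application of distributivity, and the proposition follows by evaluating the strengthened claim at the root.
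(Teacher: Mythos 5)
Your proof is correct, but note that the paper does not actually prove this proposition itself: it imports it as a known result of \citet{zhao2016unified} and only uses it as a black box inside the proof of the PC-to-Bayesian-network theorem. What you have written is essentially the standard derivation of that cited result, so there is no conflict with the paper --- you have simply filled in a proof the authors chose to reference rather than reproduce. The induction is sound: the base case is immediate, the sum-node case is a disjoint decomposition of $\mathcal{T}_t$ indexed by the chosen child edge, and the product-node case is the Cartesian-product bijection plus distributivity. The one point worth making fully explicit is why the product-node bijection is well defined in a DAG: the subcircuits rooted at the two children of a decomposable product node are \emph{node}-disjoint (any common descendant would have scope contained in the intersection of the children's scopes, which is empty, and every node in a well-formed circuit has nonempty scope), so $T_1 \cup T_2 \cup \{t\}$ really is a tree and the edge and leaf sets in the tree-value product partition cleanly between $T_1$ and $T_2$. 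It is also worth observing that smoothness is not needed for the identity itself --- only for the downstream use in the paper, where each induced tree must mention every variable so that the tree values can be matched term-by-term against the Bayesian network factorization.
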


Now, let $\text{path}(v, i, j)$ be a predicate indicating whether there is a path between $t_{p, j}$ and $t_{v, i}$ (where we use $p$ to denote the parent vtree node of $v$, and as before e.g. $t_{v, i}$ indicates the product node with scope $\mbf{X}_v$ and corresponding to $Z_v = i$). We will consider two cases depending on the value of the latents. Specifically, we will say that an assignment $\bm{z}$ is \emph{consistent} if $\text{path}(v, z_v, z_p)$ holds for all non-root inner nodes in the vtree, and \emph{inconsistent} otherwise. 

If an assignment $\bm{z}$ is inconsistent, then for any assignment $\mbf{x}$ of the observed variables, we have that $p_G(\mbf{x}, \mbf{z}) = 0$ by definition of the Bayesian network distribution. Now consider any induced subtree $T \in \mathcal{T}$. Each $T$ must contain one product node for every variable scope. In particular, $T$ must contain some product node $t_{v, i}$ such that $z_v \neq i$ (otherwise, since $\mbf{z}$ is inconsistent, a (connected) tree would be impossible). We then have $p_{\pc_{\text{aug}}, T}(\bm{x}, \bm{z}) = 0$ for all $\bm{x}$, as Equation \ref{eqn:tree_value} then contains a leaf function $f_t(z_{v}) = \mathds{1}_{z_{v} = i} = 0$. Thus $p_{\pc_{\text{aug}}}(\mbf{x}, \mbf{z}) = \sum_{T \in \mathcal{T}} p_{\pc_{\text{aug}}, T}(\bm{x}, \bm{z}) = 0$ for any $\mbf{x}$.

If an assignment $\mbf{z}$ is consistent, note that, by our assumption of alternating sums and products,  there can be exactly one path from $t_{p, j}$ to $t_{v, j}$, as $t_{p, j}$ has a unique sum node child with scope containing $\mbf{X}_v$, and this sum node must immediately have $t_{v, j}$ as a child. Thus there is exactly one induced tree $T$ containing $t_{v, z_v}$ for all (non-root) inner vtree nodes $v$. Further, examining the definition of the Bayesian network distribution $p_G(\bm{X}, \bm{z})$, this exactly matches the definition of $p_{\pc_{\text{aug}}, T}(\bm{X}, \bm{z})$: each sum node edge weight in the tree corresponds to a sum node edge weight along a path from some $t_{p, z_p}$ to $t_{v, z_v}$ and thus the CPT of $Z_v$ given $Z_p$ (the root sum node edge weight corresponds to the CPT for $Z_{\textsc{root}(V)}$), and each leaf node distribution for observed variables corresponds to the CPT for that variable given its parent.

Thus we have shown that $p_{\pc_{\text{aug}}}(\bm{X}, \bm{Z}) = p_G(\bm{X}, \bm{Z})$, as required.
\end{proof}
\propContiguousLabelingDepth*
\begin{proof}
We first show that $\mbf{C}_w$ satisfies the following properties:
\begin{enumerate}[noitemsep, leftmargin=*]
    \item $X_{a:b} = \bigcup_{Z_i \in \mbf{C}_{a:b}} \textsc{Leaves}(Z_i)$ is a \emph{disjoint} union.
    \item If $a\!\leq\!c\!\leq d\!\leq b$, then for $Z\!\in\!\mbf{C}_{c:d}$, there exists $Z^\prime \in \mbf{C}_{a:b}$ such that $Z^\prime$ is an ancestor of $Z$ in $G_{\mscr{A}}$.
\end{enumerate}
Property~1 follows from the proof of correctness of the segment tree querying algorithm. Property~2 follows from Property~1 together with the key observation that we can compute $\mbf{C}_{c:d}$ via $\bigcup_{Z_i \in \mbf{C}_{a:b}} \textsc{SegmentCover}(Z_i, \mbf{X}_{c:d} \cap \textsc{Leaves}(Z_i))$. Let $w = a\!:\!b$ be a node in $W$ with children $l = a\!:\!c$ and $r = c+1\!:\!b$; it follows from Property 1 that $\mbf{C}_{a:b}$ covers $\mbf{X}_{a:b}$ and $\mbf{C}_{a:c}$ blocks all paths from $\mbf{X}_{a:c}$ to $\mbf{C}_{c+1:b}$; it follows from Property 2 that $\mbf{C}_{a:c}$ blocks all paths from $\mbf{X}_{a:c}$ to $\mbf{X}_{a:b}$. Hence we conclude that $\mbf{C}_w$ is a valid labelling. A minor catch is that $\mbf{C}_w$ may contain variables in $\mbf{X}$, but we can replace them by their parent in $G_{\mscr{A}}$ without affecting the validity of $\mbf{C}_w$.

Now we show that $|\mbf{C}_w| \leq 4d$ and it suffices to show that for each layer of the vtree $V$, the number of nodes $v$ visited by Algorithm~3 is at most $4$. We prove this by a top-down induction. For the base case, the root node is the only node in the first layer hence the number of nodes visited is 1. Now assume that for the $k$-th layer, the number of nodes visited is at most $4$, and denote them by $v_1$, $v_2$, $v_3$, $v_4$, ordered from ``left'' to ``right''. Note that $\{v_i\}_{1 \leq i \leq 4}$ must form a contiguous segment with $v_2$ and $v_3$ completely covered by $\mbf{X}_{a:b}$; hence the function call on $v_2$ and $v_3$ would return without further recursive calls. For the function calls on $v_1$ and $v_2$, suppose both of them trigger recursive calls, in the worse case scenario, each would trigger at most two recursive calls; hence, the number of nodes visited in the $(k+1)$th layer is at most $4$. Since there are $d$ layers in total, the total number of nodes returned is at most $4d$.
\end{proof}
\corDepthReduction*
\begin{proof}
    By Theorem \ref{thm:depth_reduction}, given a structured PC $\mbf{X}$ over $n$ variables with hidden state size $h$ and respecting vtree $V$, we can generate a vtree $W$ of depth $O(\log n)$ and with labelling cardinality $M' = 3$. Thus, by Theorem \ref{theorem:pcsize} we can construct a PC representing the same function and respecting vtree $W$ of size $O(nh^3)$. The depth of the PC is then also $O(\log n)$ as we have assumed alternating sum and product nodes, so the depth of the circuit is at most double that of the vtree.
\end{proof}

\section{Computing Minimum D-separators}
\begin{algorithm}
\caption{Computing minimum d-separators for tree-shaped Bayesian network $G$ rooted at $Z$}
\label{alg:minimum_seperator}
\begin{algorithmic}
\Procedure{MinimumSeparator}{$Z$, $\mbf{A}$, $\mbf{B}$}
    \If{$\mbf{A}\!=\!\varnothing$ and $\mbf{B}\!=\!\varnothing$}
        \State \Return $\varnothing, \varnothing, \varnothing$
    \EndIf    
    \If{$\mbf{B}\!=\!\varnothing$}
        \State \Return $\{Z\}, \varnothing, \varnothing$
    \EndIf
    \If{$\mbf{A}\!=\!\varnothing$}
        \State \Return $\varnothing, \{Z\}, \varnothing$
    \EndIf
    \For{$Z_i\!\in\!\textsc{Children}(Z)$}
        \State $\mbf{C}_{i, \mbf{A}}, \mbf{C}_{i, \mbf{B}}, \mbf{C}_{i} \gets \textsc{MinimumSeparator}($
        \State \quad \quad $Z_i, \mbf{A}\!\cap\!\textsc{Leaves}(Z_i), \mbf{B}\!\cap\!\textsc{Leaves}(Z_i))$
    \EndFor
    \State $\mbf{C}_{\mbf{A}} \gets \textsc{Min}({\bigcup}_i \mbf{C}_i \cup \{Z\}, {\bigcup}_i \mbf{C}_{i,\mbf{A}})$
    \State $\mbf{C}_{\mbf{B}} \gets \textsc{Min}({\bigcup}_i \mbf{C}_i \cup \{Z\}, {\bigcup}_i \mbf{C}_{i,\mbf{B}})$    
    \State $\mbf{C} \gets \textsc{Min}(\mbf{C_A}, \mbf{C_B})$
    \State \textbf{return} $\mbf{C_A}, \mbf{C_B}, \mbf{C}$
\EndProcedure
\end{algorithmic}
\end{algorithm}
Let $G$ be a tree-shaped Bayesian network rooted at $Z$; in particular, assume that the leaves of $G$ $\subseteq \mbf{X}\!\cup\!\mbf{Z}$ and the internal nodes of $G$ $\subseteq \mbf{Z}$ (e.g. Figure~\ref{fig:bn}). Then, we want to prove that Algorithm~\ref{alg:minimum_seperator} computes a minimum d-separator $\mbf{C} \subseteq G$ for $\mbf{A}, \mbf{B} \subseteq \mbf{X}$.

As shown in Algorithm~\ref{alg:minimum_seperator}, given tree-shaped Bayesian network $G$ rooted at $Z$, the procedure $\textsc{MinimumSepartor}$ computes three sets of latent variables $\mbf{C_A}$, $\mbf{C_B}$ and $\mbf{C}$. Specifically, we shall prove the following properties: 
\begin{enumerate}[noitemsep,leftmargin=*]
    \item $\mbf{\mbf{C_A}}$ is a minimum d-separator between $\mbf{A}$ and $\mbf{B}$ that also blocks all paths from $\mbf{A}$ to $Z$.
    \item $\mbf{\mbf{C_B}}$ is a minimum d-separator between $\mbf{A}$ and $\mbf{B}$ that also blocks all paths from $\mbf{B}$ to $Z$.
    \item Either $\mbf{C_A}$ or $\mbf{C_B}$ is a minimum d-separator between $\mbf{A}$ and $\mbf{B}$ in $G$ (rooted at $Z$); hence $\mbf{C}$ is a minimum d-separator between $\mbf{A}$ and $\mbf{B}$ in $G$.
\end{enumerate}
\begin{proof}[Proof of Property 3.]
It suffices to show that $P :=$ \{d-separators between $\mbf{A}$ and $\mbf{B}$\} and $Q :=$ \{d-separators between $\mbf{A}$ and $\mbf{B}$ that blocks all paths from $\mbf{A}$ to $Z$\} $\cup$ \{d-separators between $\mbf{A}$ and $\mbf{B}$ that blocks all paths from $\mbf{B}$ to Z\} are the same set. It is obvious that $Q \subseteq P$ and let's prove that $P \subseteq Q$.
Let $\mbf{C}$ be a d-separator between $\mbf{A}$ and $\mbf{B}$ in $G$, then $\mbf{C}$ either blocks all path from $\mbf{A}$ to $Z$ or blocks all path from $\mbf{B}$ to $Z$; suppose not, then there is a path connecting $\mbf{A}$ and $\mbf{B}$ through Z; contradiction.
\end{proof}

\begin{proof}[Proof of Property 1 and Property 2.] 
We prove Property 1 (and Property 2) by a bottom-up induction on $G$. First of all it is easy to verify that the three base cases, i.e. $\mbf{A} = \varnothing$ and $\mbf{B} = \varnothing$,
$\mbf{A} = \varnothing$ and $\mbf{B} = \varnothing$, are correct.

We now prove Property 1 (the proof for Property 2 is symmetric) by induction; first of all it is clear that both ${\bigcup}_i \mbf{C}_i \cup \{Z\}$ and ${\bigcup}_i \mbf{C}_{i,\mbf{A}}$ form d-separators between $\mbf{A}$ and $\mbf{B}$, and it remains to show that the minimum of these two is a \emph{minimum} d-separator between $\mbf{A}$ and $\mbf{B}$ that blocks all paths from $\mbf{A}$ to $Z$. Suppose, towards a contradiction, let $\mbf{C_A}^\prime$ be such a d-separator of size $< \textsc{Min}(|{\bigcup}_i \mbf{C}_i \cup \{Z\}|, |{\bigcup}_i \mbf{C}_{i,\mbf{A}}|)$. There are two cases:
\begin{itemize}
    \item \textbf{If $\mbf{C_A}^\prime$ contains $Z$}: let $G_i$ be the subtree rooted at $Z_i$ and set ${\mbf{C}_{i}}^\prime = \mbf{C_A}^\prime \cap G_i$. It is immediate that $\mbf{C_i}^\prime$ is a d-separator between $\mbf{A}$ and $\mbf{B}$ in $G_i$ and by assumption, there exists at least one one $i$ such that $|{\mbf{C}_{i}}^\prime| < |\mbf{C}_{i}|$; contradicting the induction hypothesis.
    \item \textbf{If $\mbf{C_A}^\prime$ does not contain $Z$}: let $G_i$ be the subtree rooted at $Z_i$ and set ${\mbf{C}_{i, \mbf{A}}}^\prime = \mbf{C_A}^\prime \cap G_i$. Similarly, it is not hard to see that ${\mbf{C}_{i, \mbf{A}}}^\prime$ is a d-separator between $\mbf{A}$ and $\mbf{B}$ in $G_i$ that blocks all paths from $\mbf{A} \cap G_i$ to $Z_i$. By assumption, there exists at least one $i$ such that $|{\mbf{C}_{i, \mbf{A}}}^\prime| < |{\mbf{C}_{i, \mbf{A}}}|$; contradicting the induction hypothesis.
\end{itemize}
\end{proof}

\section{MULTIPLICATION WITH NON-STRUCTURED CIRCUITS}
Given a contiguous structured PC $\mscr{A}$ respecting a linear vtree and an arbitrary contiguous PC $\mscr{B}$, which is not necessarily structured, we show a recursive algorithm that multiplies $\mscr{A}$ and $\mscr{B}$ in polynomial time. Specifically, for each possible scope $\mbf{X}_{a:b}$ that appears in $\mscr{B}$, we recursively construct circuit representations for the functions $p_{q}(\mbf{X}_{a:b})\cdot p_{\mscr{A}}(\mbf{X}_{a:b} \given Z_a\!=\!i, Z_b\!=\!j)$ for $\oplus$ nodes $q\!\in\!\mscr{B}$ with scope $\mbf{X}_{a:b}$ and $i, j$ hidden states of $\mscr{A}$. In particular, from $p_{\mscr{A}}(\mbf{X}_{a:b} \given Z_a\!=\!i, Z_b\!=\!j)$ we drop $Z_a = i$ if $a = 1$ and drop $Z_{b+1} = j$ if $b = n$, thus $p_{\mscr{A}}(\mbf{X}_{a:b} \given Z_a\!=\!i, Z_b\!=\!j)$ corresponds to $p_{\mscr{A}}(\mbf{X}_{a:b} \given \mbf{C}_{a:b})$ as defined in Case 1. of Section~\ref{sec:multiplication}.

The recurrence relation is similar to that of Algorithm~\ref{alg:construction}. In the following derivation, we use the notations: (1) denote the children of $\oplus$ node $q$ by $c\!\in\!\textsc{Ch}(q)$; (2) denote the children of $\otimes$ node $c$ by $c_1$ and $c_2$; (3) denote the weight of the edge connecting $q$ and $c$ by $w_{qc}$; (4) for each $\otimes$ node $c\!\in\!\textsc{Ch}(q)$, $c$ splits $\mbf{X}_q\!=\!\{X_a, \dots, X_b\}$ into two contiguous segments, and denote them by $\mbf{X}_{c_1}\!=\!\{X_a, \dots, X_{m_c}\}$ and $\mbf{X}_{c_2} = \{X_{m_c + 1}, \dots, X_b\}$ for some $a\!\leq\!m_c\!\leq\!b$.

\begin{align*}
&\boxed{p_{q}(\mbf{X}_{a:b})\cdot p_{\mscr{A}}(\mbf{X}_{a:{b+1}} \given Z_a\!=\!i, Z_{b+1}\!=\!j)} \\
&=\sum_{c \in \textsc{Ch}(q)} p_c(\mbf{X}_{a:b}) \cdot w_{qc} \cdot p_{\mscr{A}}(\mbf{X}_{a:{b+1}} \given Z_a\!=\!i, Z_{b+1}\!=\!j)\\
&=\sum_{c \in \textsc{Ch}(q)} p_{c_1}(\mbf{X}_{a:m_c}) \cdot p_{c_2}(\mbf{X}_{m_c+1:{b+1}})\cdot w_{qc} \cdot p_{\mscr{A}}(\mbf{X}_{a:{b+1}} \given  Z_a\!=\!i, Z_{b+1}\!=\!j)\\
&=\sum_{c \in \textsc{Ch}(q)} p_{c_1}(\mbf{X}_{a:m_c}) \cdot p_{c_2}(\mbf{X}_{m_c+1:b+1})\cdot w_{qc} \cdot \sum_{k} p_{\mscr{A}}(\mbf{X}_{a:{b+1}}, {Z_{m_c}}\!=\!k \given  Z_a\!=\!i, Z_{b+1}\!=\!j)\\
&=\sum_{c \in \textsc{Ch}(q)} \sum_{k} w_{qc} \cdot p_{\mscr{A}}(Z_{m_c+1}\!=\!k \given  Z_a\!=\!i, Z_{b+1}\!=\!j) \\
&\quad\cdot p_{c_1}(\mbf{X}_{a:m_c}) \cdot p_{\mscr{A}}(\mbf{X}_{a:m_c} \given Z_{m_c+1}\!=\!k,  Z_a\!=\!i, Z_{b+1}\!=\!j) \cdot p_{c_2}(\mbf{X}_{m_c+1:b}) \cdot p_{\mscr{A}}(\mbf{X}_{a:b} \given Z_{m_c+1}\!=\!k,  Z_a\!=\!i, Z_{b+1}\!=\!j)\\
&=\sum_{c \in \textsc{Ch}(q)} \sum_{k} w_{qc} \cdot p_{\mscr{A}}(Z_{m_c+1}\!=\!k \given Z_a\!=\!i, Z_{b+1}\!=\!j) \\
&\quad\quad\cdot \boxed{p_{c_1}(\mbf{X}_{a:m_c}) \cdot p_{\mscr{A}}(\mbf{X}_{a:m_c} \given Z_a\!=\!i, Z_{m_c+1}\!=\!k)} \cdot \boxed{p_{c_2}(\mbf{X}_{m_c+1:b}) \cdot p_{\mscr{A}}(\mbf{X}_{a:b} \given Z_{m_c+1}\!=\!k, Z_{b+1}\!=\!j)}
\end{align*}
Now let's analyze the complexity of the constructed circuit, which we denote by $\mscr{C}$. $\mscr{C}$ has $O(mkh^2)$ $\oplus$ nodes in total, where $m$ is the number of scopes in $\mscr{B}$, $k := \max_{\mbf{S} \text{ a scope in $\mscr{B}$}}|\{\oplus \in \mscr{B} \text{ with scope } \mbf{S}\}|$, and $h$ is the hidden states size of $\mscr{A}$. Suppose that each $\oplus$ node in $\mscr{B}$ has at most $r$ children, then each $\oplus$ node in $\mscr{C}$ has at most $O(rh)$ children. Hence the size of $\mscr{C}$ is bounded by $O(mkh^2 \cdot rh) = O(mkr \cdot h^3)$. Note that $O(mkr)$ corresponds to $O(|\mscr{B}|)$ and $O(h^3)$ is upper-bounded by $O(h^4)$, which is $O(|\mscr{A}|^2)$; hence the size of $\mscr{C}$ is bounded by $O(|\mscr{A}|^2 |\mscr{B}|)$, which is the same complexity as stated in Theorem~\ref{theorem:linear_structured}. Hence, we can remove the assumption that $\mscr{B}$ has to be structured from Theorem~\ref{theorem:linear_structured}.
\thmContiguousNonstructured*
By a similar recursive construction, we can also remove the assumption that $\mscr{B}$ has to be structured from Theorem~\ref{theorem:depth_structured}:
\begin{theorem}
\label{theorem:depth_contiguous}
    Let $\mscr{A}$ and $\mscr{B}$ be contiguous PCs. If $\mscr{A}$ is structured of depth $d$, then then we can construct a product circuit of $\mscr{A}$ and $\mscr{B}$ of size bounded by $O(|\mscr{A}|^{12d}|\mscr{B}|)$.
\end{theorem}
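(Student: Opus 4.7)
The plan is to mirror the on-the-fly construction used for Theorem~\ref{theorem:linear_contiguous}, but replacing the two-element ``endpoint'' cover $\{Z_a, Z_{b+1}\}$ of the linear case with the general segment-tree cover $\mbf{C}_{a:b}\!=\!\textsc{SegmentCover}(V,\mbf{X}_{a:b})$ produced by Algorithm~\ref{alg:segment_cover}. By the proposition accompanying that algorithm, $|\mbf{C}_{a:b}|\!\leq\!4d$ for every contiguous segment $\mbf{X}_{a:b}$ of $\mbf{X}$, and the covers satisfy the ancestor-compatibility property: if $\mbf{X}_{a:b}\!\supseteq\!\mbf{X}_{c:d}$ then every $Z\!\in\!\mbf{C}_{c:d}$ has an ancestor in $\mbf{C}_{a:b}$ in $G_{\mscr{A}}$. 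This is precisely the property that will let the recurrence compose across the arbitrary splits introduced by $\otimes$ nodes of $\mscr{B}$.

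I would then, for every scope $\mbf{X}_{a:b}$ appearing in $\mscr{B}$, every sum node $q\!\in\!\mscr{B}$ with $\text{sc}(q)\!=\!\mbf{X}_{a:b}$, and every joint assignment $\mbf{i}$ to $\mbf{C}_{a:b}$, build a sum node $u_{q,\mbf{i}}$ representing
\[
p_{q}(\mbf{X}_{a:b})\,\cdot\,p_{\mscr{A}}(\mbf{X}_{a:b}\mid\mbf{C}_{a:b}\!=\!\mbf{i}).
\]
For each child $c\!\in\!\text{Ch}(q)$ with split point $m_c$, the two child sub-scopes $\mbf{X}_{a:m_c}$ and $\mbf{X}_{m_c+1:b}$ are covered by $\mbf{C}_{a:m_c}$ and $\mbf{C}_{m_c+1:b}$, and the ancestor-compatibility property implies that $\mbf{C}_{a:b}\!\cup\!\mbf{C}_{a:m_c}\!\cup\!\mbf{C}_{m_c+1:b}$ d-separates the two halves in $G_{\mscr{A}}$. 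Exactly as in the derivation preceding Theorem~\ref{theorem:linear_contiguous}, marginalizing out the child covers not forced by $\mbf{C}_{a:b}$ yields
\[
u_{q,\mbf{i}}=\!\!\sum_{c\in\text{Ch}(q)}\!\!\sum_{\mbf{j},\mbf{k}} w_{qc}\cdot p_{\mscr{A}}(\mbf{j},\mbf{k}\!\mid\!\mbf{i})\cdot u_{c_1,(\mbf{i},\mbf{j})}\cdot u_{c_2,(\mbf{i},\mbf{k})},
\]
where $\mbf{j}$ ranges over $\mbf{C}_{a:m_c}\!\setminus\!\mbf{C}_{a:b}$, $\mbf{k}$ over $\mbf{C}_{m_c+1:b}\!\setminus\!\mbf{C}_{a:b}$, and the conditional probabilities $p_{\mscr{A}}(\mbf{j},\mbf{k}\!\mid\!\mbf{i})$ are read off from the Bayesian network $G_{\mscr{A}}$. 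The base case is a leaf $q$ of $\mscr{B}$ at variable $X_t$: we directly form $p_q(X_t)\cdot p_{\mscr{A}}(X_t\!\mid\!\mbf{C}_{t:t}\!=\!\mbf{i})$ as a product leaf. The root of the constructed circuit is obtained by summing $u_{\text{root}(\mscr{B}),\mbf{i}}$ weighted by $p_{\mscr{A}}(\mbf{C}_{1:n}\!=\!\mbf{i})$.

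For the size analysis, the number of sum nodes $u_{q,\mbf{i}}$ is bounded by $|\mscr{B}|\cdot h^{|\mbf{C}_{a:b}|}\!\leq\!|\mscr{B}|\cdot h^{4d}$. At each such node, one sum-layer edge is produced per child $c$, per $(\mbf{j},\mbf{k})$, so the fan-in is bounded by $h^{|\mbf{C}_{a:m_c}\cup\mbf{C}_{m_c+1:b}\setminus\mbf{C}_{a:b}|}\!\leq\!h^{8d}$, giving a total edge count of $O(|\mscr{B}|\,h^{12d})$. Since $h\!\leq\!|\mscr{A}|$, this matches the claimed $O(|\mscr{A}|^{12d}|\mscr{B}|)$ bound.

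The main obstacle will be rigorously justifying the independence step in the recurrence: one must check that conditioning on $\mbf{C}_{a:b}$ (the cover of the parent scope) together with $\mbf{C}_{a:m_c}$ and $\mbf{C}_{m_c+1:b}$ factorizes $p_{\mscr{A}}(\mbf{X}_{a:b}\!\mid\!\cdots)$ into the required product over the two halves, and that the needed transition weight $p_{\mscr{A}}(\mbf{j},\mbf{k}\!\mid\!\mbf{i})$ really is an entry of the joint over at most $12d$ latent variables which can be precomputed from $G_{\mscr{A}}$. Both facts reduce to the properties already established for Algorithm~\ref{alg:segment_cover}: Property~1 ensures the covers partition $\mbf{X}_{a:b}$ into the two child subscopes in $G_{\mscr{A}}$, and Property~2 gives the ancestor relation needed to view the conditional $p_{\mscr{A}}(\mbf{j},\mbf{k}\!\mid\!\mbf{i})$ as a conditional of a sub-tree rooted at nodes of $\mbf{C}_{a:b}$, so it can be read off from $G_{\mscr{A}}$ in time polynomial in $h^{12d}$. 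Once these two points are in hand, the rest is the same bookkeeping as in Theorem~\ref{theorem:linear_contiguous}.
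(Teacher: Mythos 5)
Your proposal follows essentially the same route as the paper, which justifies this theorem only by remarking that it follows ``by a similar recursive construction'' to the linear-vtree case (Theorem~\ref{theorem:linear_contiguous}); you have spelled out exactly that construction, replacing the endpoint cover $\{Z_a, Z_{b+1}\}$ with the segment-tree cover of size at most $4d$ from Algorithm~\ref{alg:segment_cover} and redoing the bookkeeping to obtain $O(|\mscr{B}|\,h^{12d}) \subseteq O(|\mscr{A}|^{12d}|\mscr{B}|)$. The recurrence, the appeal to Properties 1 and 2 of the segment cover for the conditional-independence step, and the size analysis are all correct and consistent with the paper's intended argument.
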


\section{COMPLETE EXAMPLE OF RESTRUCTURING}

In the section, we show a simple example of our restructuring algorithm in its entirety. In particular, we will consider the restructuring of a 4-variable circuit following a linear vtree, to a contiguous log-depth vtree. 

In Figure \ref{fig:hmm_4} we show the procedure of converting the linear vtree into a Bayesian network representation, the target contiguous vtree $W$, and the labeling constructed by Algorithm \ref{alg:compute_label}. In particular, this is a valid cover according to Definition \ref{def:correct_labels} as $\{Z_2\}$ is a cover for both $\{X_1, X_2\}$ and $\{X_3, X_4\}$, and the second and third conditions in the definition are satisfied as e.g. $\{Z_2\}$ blocks paths between $\{X_1, X_2\}$ and $\{Z_2\}$ (trivially).

We further show an explicit example of the restructuring of a PC respecting the right-linear vtree in Figure \ref{fig:hmm_4_pc}. In the original PC, we have each $Z_i \in \{0, 1\}$, corresponding to one of the product nodes in each scope. In the restructuring, we follow the generic construction in Figure \ref{fig:construction} for each triple of vtree nodes $(w, l, r)$ given the labeling in Figure \ref{fig:hmm_4_label}, giving rise to the restructured PC. For example, note that the root probabilities $(0.66, 0.34)$ correspond to the marginal probability $p(Z_2)$ in the original PC. One subtlety is that the leaf nodes do not all directly correspond to the leaf nodes in the original PC; for example $p(X_1|Z_2) = \sum_{Z_1} p(Z_1|Z_2) p(X_1|Z_1)$ where $p(X_1|Z_1)$ are the leaf nodes with scope $\{X_1\}$ in the original circuit; but this is not problematic as we can simply mix the leaf nodes (either explicitly with a sum node, or by constructing new leaf nodes). 

\begin{figure}
    \centering
    \begin{subfigure}{0.3\linewidth}
        \centering
        \includegraphics[width=0.9\linewidth]{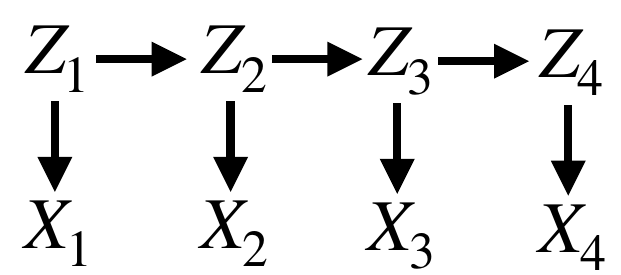}
        \caption{Bayesian Network for linear vtree $V_{v\mapsto Z_v}$}
    \end{subfigure}
    \begin{subfigure}{0.3\linewidth}
        \centering
        \includegraphics[width=0.8\linewidth]{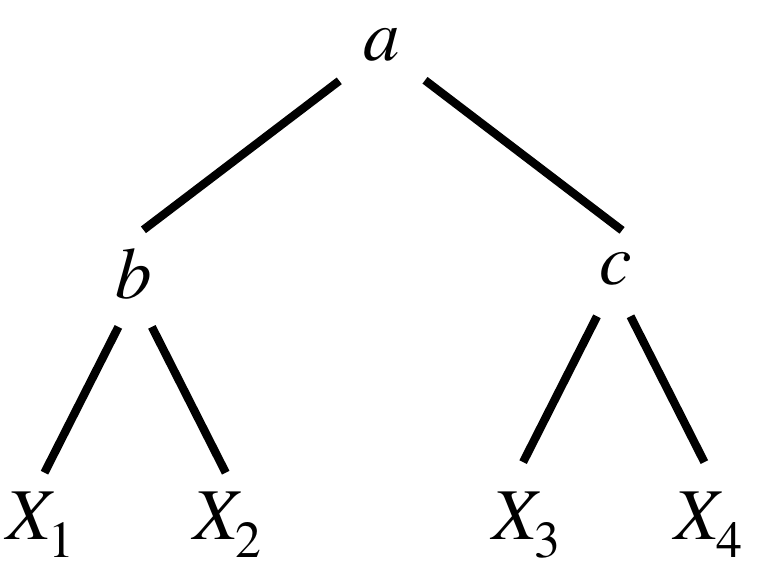}
        \caption{Target vtree $W$}
    \end{subfigure}
    \begin{subfigure}{0.37\linewidth}
        \centering 
        \includegraphics[width=0.8\linewidth]{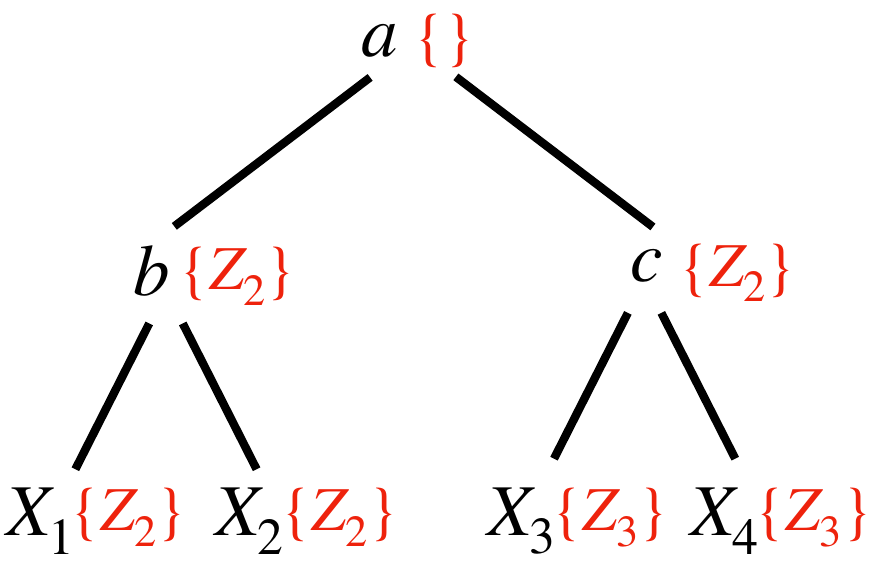}
        \caption{Labelled target vtree $W$}
        \label{fig:hmm_4_label}
    \end{subfigure}
    \caption{Example of restructuring labeling.}
    \label{fig:hmm_4}
\end{figure}

\begin{figure}
    \centering
    \begin{subfigure}{0.3\linewidth}
        \centering
        \includegraphics[width=0.95\linewidth]{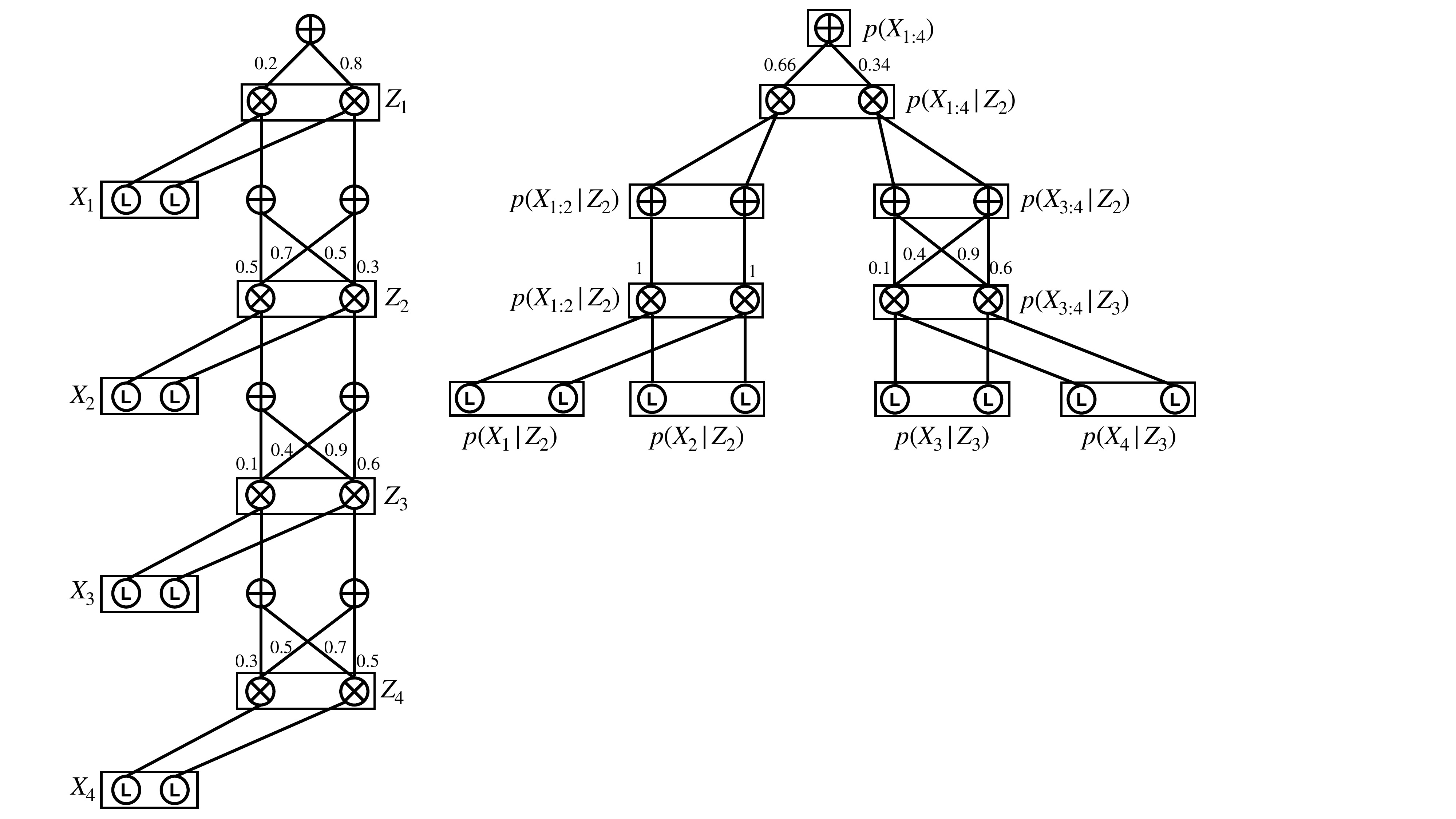}
        \caption{Original PC}
    \end{subfigure}
    \begin{subfigure}{0.69\linewidth}
        \centering
        \includegraphics[width=0.95\linewidth]{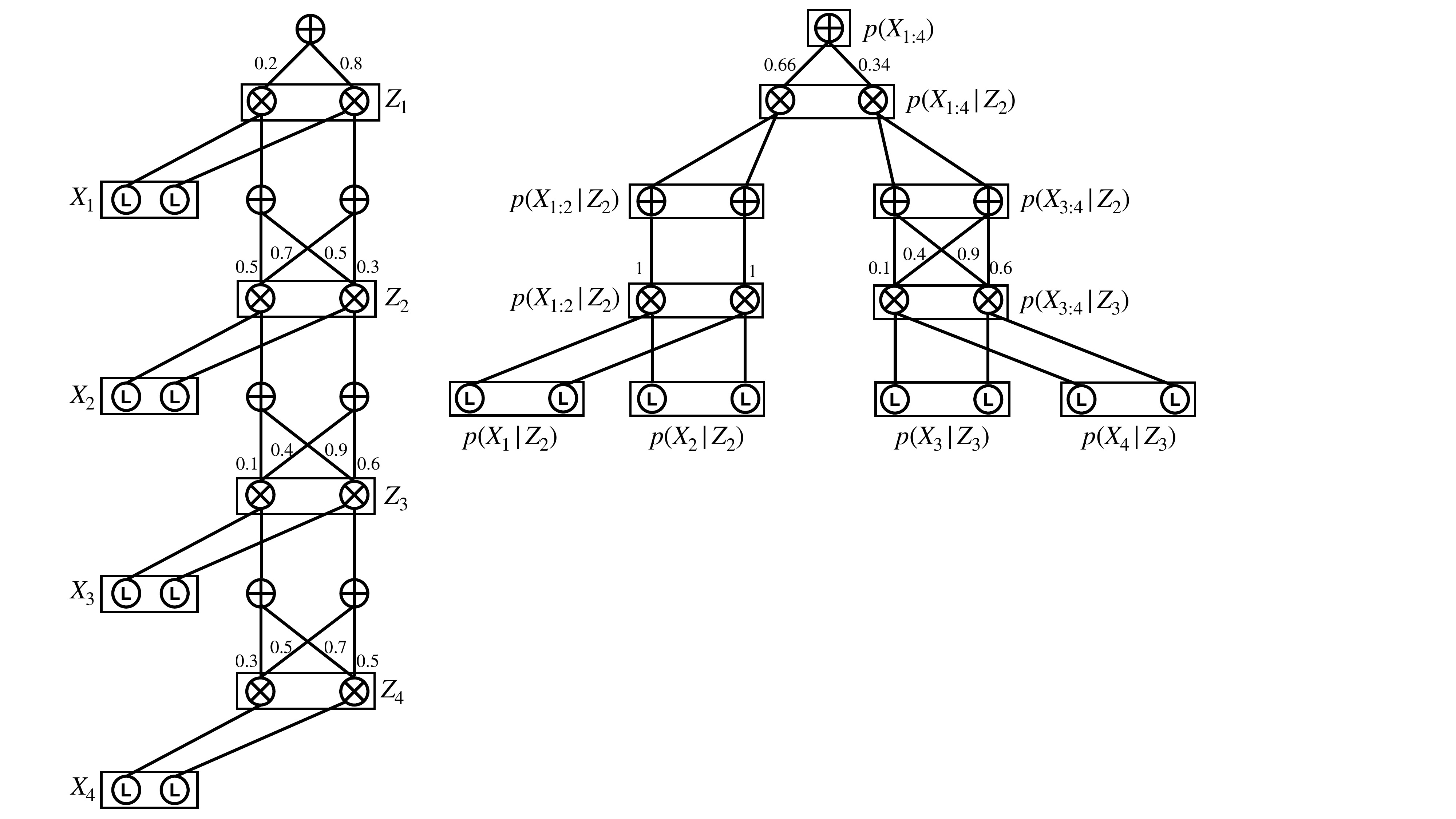}
        \caption{Restructured PC}
    \end{subfigure}
    \caption{Example of original and restructured PC for the structures in Figure \ref{fig:hmm_4}.}
    \label{fig:hmm_4_pc}
\end{figure}

\end{document}